\title{FedCM: Federated Learning with \\Client-level Momentum}
\author{Jing Xu$^1$ \hspace{0.28cm}Sen Wang$^2$  \hspace{0.35cm}Liwei Wang$^3$ \hspace{0.28cm}Andrew Chi-Chih Yao$^4$ \\

\small $^1$ School of EECS, Peking University \\
\small $^2$ Theory Lab, 2021 Labs, Huawei Technologies, Co.Ltd., Hong Kong\\ 
\small $^3$ Key Laboratory of Machine Perception, MOE, School of EECS, Peking University\\
\small $^4$   Institute for Interdisciplinary Information Sciences, Tsinghua University
\\
{\tt\small 1700012451@pku.edu.cn}, ~{\tt\small wangsen31@huawei.com},\\ ~{\tt\small wanglw@pku.edu.cn}, ~{\tt\small andrewcyao@tsinghua.edu.cn}}
\begin{document}

\maketitle

\begin{abstract}
Federated Learning is a distributed machine learning approach which enables model training without data sharing. In this paper, we propose a new federated learning algorithm, Federated Averaging with Client-level Momentum (FedCM), to tackle problems of partial participation and client heterogeneity  in real-world federated learning applications. FedCM aggregates global gradient information in previous communication rounds and modifies client gradient descent with a momentum-like term, which can effectively correct the bias and improve the stability of local SGD. We provide theoretical analysis to highlight the benefits of FedCM. We also perform extensive empirical studies and demonstrate that FedCM achieves superior performance in various tasks and is robust to different levels of client numbers, participation rate and client heterogeneity.
\end{abstract}
\section{Introduction}
\label{sec:intro}
Federated Learning, first proposed in \citep{mcmahan2017communication}, has gradually evolved into a standard approach for large-scale machine learning \citep{kairouz2019advances,li2020federated}. In contrast to traditional machine learning paradigms which train a model on a pre-collected dataset, federated learning utilizes the large amount of geographically distributed data in edge devices such as smartphones, wearable devices, or institutions such as hospitals and banks, to train a centralized model without transmitting the data. Furthermore, federated learning achieves the most basic level of data privacy in this manner, which is crucial for various applications \citep{regulation2016regulation}.

Typically, federated Learning tasks are formulated into finite-sum optimization problems and usually solved using variants of distributed optimization algorithms. One of the most popular federated learning algorithms, FedAvg \citep{mcmahan2017communication}, generalizes distributed SGD and performs multiple local gradient updates per round to save communication cost. Ever since its appearance, Various federated optimization algorithms \citep{li2018federated,karimireddy2020scaffold,reddi2020adaptive,acar2021federated} are proposed which make further efforts to accommodate SGD to federated learning setting.


Despite their empirical success, recent federated optimization algorithms are faced with the following two challenges of federated learning:

\textbf{Constrained Communication and Limited Participation.} In the so-called cross-device setting, the participating clients consist of a large number, usually millions or more \citep{kairouz2019advances}, of edge computing devices such as mobile phones. The communication between server and edge devices, typically slow and expensive, becomes a bottleneck in many federated learning tasks. Moreover, the connection link between server and clients is highly unreliable, which indicates that only a small portion of devices are able to participate in each communication rounds \citep{reddi2020adaptive}.
The low participation rate makes it difficult to maintain client states between communication. As we will shown in experiments, existing methods \citep{karimireddy2020scaffold} which keep local states between communication suffer a severe performance drop when we decrease client participation rate.
    
\textbf{Client Heterogeneity.} The heterogeneity of client data arises naturally as the patterns of local training data reflect the usage of the device by a particular user, which can be quite different from the population distribution.
It has been demonstrated both empirically and theoretically \citep{zhao2018federated,karimireddy2020scaffold} that client heterogeneity can introduce a drift in client updates and give rise to slower and unstable convergence, which is often termed as client drift. 
Recently, some works \citep{hsu2019measuring,xie2019local,reddi2020adaptive} extends traditional adaptive optimization methods to federated learning setting. Despite their empirical performance, these methods are not able to alleviate client drift, as they only involve server level adaptation and fail to modify local update directions.
 
 
To properly handle client heterogeneity, a federated learning algorithm is required to incorporate global gradient information into client local updates, in order to close the gap between local and global loss function. Furthermore, the global gradient information should not only contain gradient information computed in this round, but also keep track of gradient information obtained in previous rounds, which belongs to clients inactive this round due to limited participation.

Motivated by above challenges and discussions, we propose a novel federated learning algorithm, Federated Averaging with Client-level Momentum (FedCM), which introduces a momentum term that aggregates global gradient information in previous rounds to modify the gradient steps of client updates. Instead of directly applying traditional momentum method to client or server updates, FedCM seamlessly incorporates the usage and update of momentum term into client and server gradient steps of FedAvg. 

FedCM has two major strengths over existing algorithms to tackle aforementioned challenges of federated learning.
Firstly, FedCM do not require clients to maintain states between rounds, which saves client storage costs and guarantees performance in low client participation scenarios, compared with algorithms that keep control variates at client side \citep{karimireddy2020scaffold,acar2021federated}.
Secondly, In contrast to previous works that focus on server side momentum \citep{hsu2019measuring, reddi2020adaptive}, in FedCM each client performs gradient descent using a combination of its local gradient as well as the global momentum that contains gradient information of other clients, which alleviates the influence of client heterogeneity. 
Our theoretical and empirical findings demonstrate the benefits of FedCM over existing methods.

\textbf{Contributions.} We summarize our contributions as follows: 
\begin{enumerate}
    \item We propose FedCM, a novel, efficient and robust federated optimization algorithm, in which the server maintains a momentum term to guide client gradient updates. We show that FedCM is compatible with the real-world settings of cross-device federated learning, and successfully tackles the problem of client heterogeneity and partial participation.
    \item We give the convergence analysis of FedCM for strongly convex, general convex and non-convex functions. The communication upper bounds we obtain match the best known results of our interest. Our analysis highlights the benefits of introducing client momentum in FedCM and shows the trade off in hyperparameter selection.
    \item We perform extensive experiments on CIFAR10 and CIFAR100 datasets, across various choices including client numbers, participation rate, client heterogeneity. We demonstrate that our method consistently outperforms other strong baselines and is more robust to client heterogeneity and partial participation.
\end{enumerate}
\section{Related works}
\label{sec:related-work}
Federated learning was first proposed in \cite{mcmahan2017communication}, which summarizes the key properties of federated learning as non-iidness, unbalancedness, massively distribution, limited communication and proposes FedAvg algorithm as a solution. We refer the readers to \cite{li2020federated} and \cite{kairouz2019advances} for a more detailed overview of this field.
Here we highlight the problem of non-iidness, also known as client heterogeneity, in federated learning. Client heterogeneity consists of statistical heterogeneity, i.e. the data distributions differ across different clients, and system heterogeneity, i.e. the hardware capabilities such as computational power, storage, communication speed vary for different clients. In this work we mostly focus on the former one. Client heterogeneity is first empirically observed by \cite{zhao2018federated}, which demonstrates that the performance of FedAvg has a significant degeneration on non-iid client training data. A lot of works make further efforts to explore the influence of heterogeneity in federated learning \citep{hsu2019measuring,hsieh2020non,wang2020tackling}. Some works propose personalization strategy \citep{dinh2020personalized, jiang2019improving,hanzely2020lower}, which can be combined with our method.

This work focuses on the optimization perspective of federated learning. Distributed optimization methods \citep{zinkevich2010parallelized,boyd2011distributed,dean2012large,shamir2014communication,stich2018local} have been well studied for large scale machine learning before federated learning emerged. However, most of them fail to handle the new challenges brought about by federated learning. FedAvg \citep{mcmahan2017communication} extends upon previous works and saves communication costs by performing multiple gradient updates per communication round. Following FedAvg, a lot of recent works \citep{pathak2020fedsplit,yuan2020federated,li2019feddane,xie2019local} have made further steps to adapt gradient methods to federated setting.
There is a long line of work focusing on handling client heterogeneity, by regularizing local loss function with a proximal term \citep{li2018federated}, applying variance reduction methods to eliminate non-iidness across clients \citep{karimireddy2020scaffold,liang2019variance} and using primal-dual approaches \citep{zhang2020fedpd,acar2021federated}.
Most of them require full participation, additional communication or client storage, which can be problematic in federated learning tasks.
There is another line of work focusing on modifying server updates using momentum terms, which is related to FedCM.
However, these methods focus on introducing momentum into either server-side updates \citep{hsu2019measuring,huo2020faster,reddi2020adaptive} or client side updates \citep{liu2020accelerating}, while FedCM incorporates the usage and update of momentum term into both local and global updates.

\section{Preliminaries}
\subsection{Notations and Problem Formulation}
Throughout this paper, we use $\mathbb{N}$ and $\mathbb{N}_+$ for non-negative integers and positive integers respectively. For $n\in\mathbb{N}$, let $[n]$ be a short hand for the indicies $\{0,1,\cdots, n-1\}$. $\|\cdot\|$ denotes the Euclidean $\ell_2$ norm if not otherwise specified. We use $\mathcal{O}$ to denote asymptotic upper bound which hides logarithmic factors. We use lowercase $x$ to denote model parameters and lowercase $z$ to denote training data or test data. $\ell(x,z)$ stands for the loss of data $z$ evaluated at model parameters $x$.

In federated learning, we assume that there are totally $N$ clients which hold their private data and perform local computation, as well as a central server which sends and receives messages from the clients. 
The $i$-th client holds $n_i$ data points $\{z_{i,j}\}_{j\in[n_i]}$ drawn from distribution $\mathcal{D}_i$. Note that $\mathcal{D}_i$ may differ across different clients, which corresponds to client heterogeneity. 
Let $f_i(x)$ be the loss function of the $i$-th client, i.e. 
$f_i(x)=\mathbb{E}_{z\sim \mathcal{D}_i}\ell(x,z)$, and its empirical counterpart as 
$\hat{f}_i(x)=\frac{1}{n_i}\sum_{j\in[n_i]}\ell(x,z_{i,j})$. 




The goal of federated learning is to minimize the average loss of all the clients, which can be formulated as 
\begin{equation}\label{formulation}
    \argmin_x f(x)=\frac{1}{N} \sum_{i\in[N]} f_i(x)
\end{equation}
Typically, federated learning algorithms orchestrates communication between server and clients to find the parameters $x$ which minimizes the empirical loss function $\hat{f}(x)=\frac{1}{N} \sum_{i\in[N]} \hat{f}_i(x)$. Therefore, we drop the superscript and only considers empirical risk minimization thereafter.

\subsection{FedAvg Algorithm}

FedAvg \citep{mcmahan2017communication} is one of the most popular methods to solve problem \ref{formulation}. FedAvg works as follows: in $t$-th communication round, the server randomly selects $S$ clients and broadcasts its model parameters $x_t$ to them. After receiving the global model, these clients parallelly perform $K$ local stochastic gradient descent using their private data, and send the resulting model $x^t_{i,K}$ back to the server. After collecting the client models, the server averages their parameters to get the new global model
$x_{t+1}$. The Pesudocode of FedAvg is given in algorithm \ref{FedAvg alg}. Note that we introduce the gradient average $\Delta_t$ as the pseudo gradient with a server learning rate $\eta_g$. The original version in \citep{mcmahan2017communication} corresponds to $\eta_g=K\eta_l$ in our formulation. It has been shown that choosing an appropriate global learning rate can improve the convergence of FedAvg\citep{reddi2020adaptive}.

\begin{algorithm}[]
\caption{FedAvg}
\begin{algorithmic}[1]\label{FedAvg alg}
\STATE Initialization: $x_0$, learning rates $\eta_l,\eta_g$, number of communication rounds $T$, number of local iterations $K$ 
\FOR{$t=0$ to $T-1$}
\STATE Sample subset $\mathcal{S}$ of clients
\FOR{Each client $i\in \mathcal{S}$ \textbf{in parallel}}
\STATE $x^t_{i,0}=x_{t}$
\FOR{$k=0$ to $K-1$}
\STATE Compute an unbiased estimate $g^t_{i,k}$ of $\nabla f_i(x^{t}_{i,k})$
\STATE $x^{t}_{i,k+1}=x^t_{i,k}-\eta_l g^t_{i,k}$
\ENDFOR
\STATE $\Delta^t_i= x^t_{i,K}-x_t$
\ENDFOR
\STATE $\Delta_{t+1}=-\frac{1}{\eta_l K|\mathcal{S}|}\sum_{i\in\mathcal{S}}\Delta^t_i$
\STATE $x_{t+1}=x_t-\eta_g\Delta_{t+1}$
\ENDFOR
\end{algorithmic}
\end{algorithm}

\section{Algorithm}
In this section, we describe FedCM algorithm, and give explanations on how FedCM reduces client heterogeneity and improve convergence. Then we discuss the features of FedCM and show that FedCM is compatible with real-world federated learning settings.

\subsection{FedCM algorithm}
In each communication round of FedAvg, $\Delta_t$ serves as the direction for server model update. Noting that $\Delta_t$ aggregates the gradient information of participating clients, a natural idea is to reuse $\Delta_t$ to guide the client gradient descent in next communication round. This leads to the FedCM algorithm.

The pseudocode of FedCM is given in algorithm \ref{FedCM alg}. Note that compared with FedAvg in algorithm \ref{FedAvg alg}, the only modification is in line \ref{line 8}, where we use the weighted average of current client gradient $g^t_{i,k}$ and descent direction of the server model in previous round $\Delta_t$, as the parameter update direction for the client. $\Delta_t$ is updated by the server when updating server model (see line \ref{line 13}), by averaging the parameter change $\Delta^t_i$ of all participating clients.
\begin{algorithm}[]
\caption{FedCM}
\begin{algorithmic}[1]\label{FedCM alg}
\STATE Initialization: $x_0, \Delta_{0}=\mathbf{0}$, decay parameter $\alpha\in(0,1]$, learning rates $\eta_l,\eta_g$, number of communication rounds $T$, number of local iterations $K$ 
\FOR{$t=0$ to $T-1$}
\STATE Sample subset $\mathcal{S}$ of clients
\FOR{Each client $i\in \mathcal{S}$ \textbf{in parallel}}
\STATE $x^t_{i,0}=x_{t}$
\FOR{$k=0$ to $K-1$}
\STATE Compute an unbiased estimate $g^t_{i,k}$ of $\nabla f_i(x^{t}_{i,k})$
\STATE $v^t_{i,k}=\alpha g^t_{i,k}+(1-\alpha)\Delta_{t}$ \label{line 8}
\STATE $x^{t}_{i,k+1}=x^t_{i,k}-\eta_l v^t_{i,k}$
\ENDFOR
\STATE $\Delta^t_i= x^t_{i,K}-x_t$
\ENDFOR
\STATE $\Delta_{t+1}=-\frac{1}{\eta_l K|\mathcal{S}|}\sum_{i\in\mathcal{S}}\Delta^t_i$\label{line 13}
\STATE $x_{t+1}=x_t-\eta_g\Delta_{t+1}$\label{line 14}
\ENDFOR
\end{algorithmic}
\end{algorithm}

Define the gradient average in the $t$-th global epoch as $\Deltil_{t}$:
$$
\Deltil_{t}=\frac{1}{K|\mathcal{S}|}\sum_{i\in \mathcal{S}, k \in [K]}g^t_{i,k}
$$ Then we have the following lemma regarding the update of $\{\Delta_t\}$:
\begin{lemma}\label{delta_update}
$\Delta_t$ is the exponential moving average of client gradients, i.e.
$$
\Delta_{t+1}=\alpha \Deltil_{t}+(1-\alpha)\Delta_t
$$
\end{lemma}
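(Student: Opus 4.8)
The plan is to unroll the inner local-SGD loop, substitute the definition of the client update direction $v^t_{i,k}$, average over the sampled clients, and then match the result against the server update rule in line~\ref{line 13}. No auxiliary estimates are needed; the statement is a purely algebraic identity linking the two update rules in Algorithm~\ref{FedCM alg}.

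\textbf{Step 1 (telescope the local updates).} Since $x^t_{i,0}=x_t$ and $x^t_{i,k+1}=x^t_{i,k}-\eta_l v^t_{i,k}$, summing over $k\in[K]$ telescopes to $x^t_{i,K}-x_t=-\eta_l\sum_{k\in[K]}v^t_{i,k}$, hence $\Delta^t_i=-\eta_l\sum_{k\in[K]}v^t_{i,k}$.

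\textbf{Step 2 (substitute $v^t_{i,k}$).} Plugging in $v^t_{i,k}=\alpha g^t_{i,k}+(1-\alpha)\Delta_t$ and using that $\Delta_t$ does not depend on $k$, so that $\sum_{k\in[K]}(1-\alpha)\Delta_t=K(1-\alpha)\Delta_t$, gives
$\Delta^t_i=-\eta_l\bigl(\alpha\sum_{k\in[K]}g^t_{i,k}+K(1-\alpha)\Delta_t\bigr)$.

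\textbf{Step 3 (average over $\mathcal{S}$ and simplify).} Substituting this into $\Delta_{t+1}=-\frac{1}{\eta_l K|\mathcal{S}|}\sum_{i\in\mathcal{S}}\Delta^t_i$, the factor $-\eta_l$ cancels and one obtains $\Delta_{t+1}=\frac{\alpha}{K|\mathcal{S}|}\sum_{i\in\mathcal{S},k\in[K]}g^t_{i,k}+(1-\alpha)\Delta_t$; the first term is exactly $\alpha\Deltil_t$ by the definition of $\Deltil_t$, and the $K|\mathcal{S}|$ in the denominator cancels the analogous factor in the second term, leaving $(1-\alpha)\Delta_t$. This is the claimed recursion.

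Regarding difficulty: there is no real obstacle here, only careful bookkeeping of the normalization constants $\eta_l$, $K$, and $|\mathcal{S}|$ and the observation that $\Delta_t$ is held fixed throughout the $K$ local steps of round $t$. One may additionally remark that, together with the initialization $\Delta_0=\mathbf{0}$ from Algorithm~\ref{FedCM alg}, this recursion exhibits $\Delta_{t+1}$ as the exponential moving average $\Delta_{t+1}=\alpha\sum_{s=0}^{t}(1-\alpha)^{t-s}\Deltil_s$ of the per-round gradient averages, which justifies the terminology in the lemma statement.
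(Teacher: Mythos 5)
Your proposal is correct and follows essentially the same route as the paper's own proof: telescope the $K$ local updates to write $\Delta^t_i=-\eta_l\sum_{k\in[K]}v^t_{i,k}$, substitute $v^t_{i,k}=\alpha g^t_{i,k}+(1-\alpha)\Delta_t$, and average over $\mathcal{S}$ so that the normalization $\frac{1}{\eta_l K|\mathcal{S}|}$ cancels exactly. The closing remark about unrolling the recursion into $\Delta_{t+1}=\alpha\sum_{s=0}^{t}(1-\alpha)^{t-s}\Deltil_s$ is a nice addition that the paper only states informally.
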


\begin{proof}
\begin{align*}
    \Delta_{t+1}&=-\frac{1}{\eta_l K|\mathcal{S}|}\sum_{i\in \mathcal{S}}\Delta^t_i
    =-\frac{1}{\eta_l K|\mathcal{S}|}\sum_{i\in \mathcal{S}}(x^t_{i,K}-x_t)\\&=-\frac{1}{\eta_l K|\mathcal{S}|}\sum_{i\in \mathcal{S}}\sum_{k=0}^{K-1}-\eta_l(\alpha g^t_{i,k}+(1-\alpha)\Delta_t)\\&=\frac{1}{K|\mathcal{S}|}\sum_{i\in \mathcal{S}}\sum_{k=0}^{K-1}(\alpha g^t_{i,k}+(1-\alpha)\Delta_t)
    \\&=\alpha \Deltil_{t}+(1-\alpha)\Delta_t
\end{align*}
\end{proof}

Lemma \ref{delta_update} implies that $\Delta_t$ is the exponential moving average of past client gradients, which is similar to the momentum term in traditional optimization algorithms, justifying the name of FedCM. Note that despite only a subset of all clients are sampled each round, the gradient information of past local updates is still contained in $\Delta_t$. Therefore, FedCM is robust to partial client participation in federated learning. Furthermore, the momentum term $\Delta_t$ serves as an approximation to the gradient of the global loss function $\nabla f(x)$, i.e. $\Delta_t\approx \nabla f(x_t)$. Therefore, we have
$$
v^t_{i,k}=\alpha g^t_{i,k}+(1-\alpha)\Delta_{t}
\approx g^t_{i,k}+(1-\alpha)(\nabla f(x_t)-\nabla f_i(x_t))
$$
This implies that FedCM adds a correction term to the local gradient direction, and this term asymptotically aligns with the difference between global and local gradient. This observation explains why FedCM reduces client heterogeneity and achieves better performance.

\subsection{Discussions}
While there exists some related works \citep{karimireddy2020scaffold,acar2021federated} which also handle client heterogeneity by modifying local SGD procedure, one of the distinguishing features of FedCM is that clients do not have to keep local states in FedCM, which has the following major advantages. First, the history-free property of the algorithm makes cross-device federated learning much more flexible in a 'plug and use' manner, as the server does not have to resort to the same clients every round and any new-arriving client can join the training immediately without any warmup. Second, it saves the storage burden of clients, which can be crucial for some mobile devices. Furthermore, the stored local states can easily get stale if only a small fraction of devices are active each round, and this will hurt the convergence of the algorithm \citep{reddi2020adaptive}.

One may wonder whether transmitting momentum term $\Delta_t$ in FedCM will increase communication burden. First we point out that FedCM only increases the server-to-client communication, i.e. broadcasting the momentum term together with the parameter. The communication in the client-to-server direction remains unchanged. This is compatible with the asymmetric property of Internet service: the uplink is typically much slower than downlink \citep{konevcny2016federated}. For example the global average Internet speeds for mobile phones are 48.40 Mbps download v.s. 12.60 Mbps upload \citep{speed}. Moreover, it is impossible to remove the bias in local gradient update caused by client heterogeneity, without additional information from the server and local states stored by the clients. In this sense, it is  unavoidable to use additional communication if clients do not store local states.

\section{Convergence analysis}
\label{Convergence analysis}
We give the theoretical analysis of FedCM in this section. First, we state the convexity and smoothness assumptions about the local loss function $f_i(x)$ and the global loss function $f(x)$, which are standard in optimization literature \citep{reddi2020adaptive,karimireddy2020scaffold}.

\begin{assumption}[Convexity]\label{convexity}
$f_i$ is $\mu$-strongly-convex for all $i\in[N]$, i.e.
$$
f_i(y)\ge f_i(x)+\left<\nabla f_i(x),y-x\right>+\frac{\mu}{2}\|y-x\|^2
$$
for all $x,y$ in its domain and $i\in [N]$. We allow $\mu=0$, which corresponds to general convex functions.
\end{assumption}

\begin{assumption}[Smoothness]\label{smoothness}
$f_i$ is $L$-smooth for all $i\in[N]$, i.e. $$\|\nabla f_i(x)-\nabla f_i(y)\|\le L\|x-y\|$$ for all $x,y$ in its domain and $i\in [N]$.
\end{assumption}

The following assumption is commonly taken in analysis of momentum-like algorithms \citep{reddi2020adaptive,tong2020effective}. 

\begin{assumption}[Bounded gradient]\label{bounded gradient}
$f$ has $G$-bounded gradients, i.e.
$$\|\nabla f(x)\|\le G$$
for all $x$.
\end{assumption}
Note that in assumption \ref{bounded gradient},we only require the global loss function, instead of every local loss function, to have bounded gradient. Otherwise, assumption \ref{global variance} would be redundant.

The next two assumptions bound the gradient noise at both local and global levels.

\begin{assumption}[Unbiasedness and bounded variance of stochastic gradient]\label{local variance}
The stochastic gradient $\nabla f_i(x,\xi)$ computed by the i-th client at model parameter $x$ using minibatch $\xi$ is an unbiased estimator of $\nabla f_i(x)$ with  variance bounded by $\sigma_l^2$, i.e.
$$
\mathbb{E}_{\xi}[\nabla f_i(x,\xi)]=\nabla f_i(x),\ \mathbb{E}_{\xi}\|\nabla f_i(x,\xi)-\nabla f_i(x)\|^2\le \sigma_l^2 
$$
for all  $x$ and $i\in [N]$.
\end{assumption}

\begin{assumption}[Bounded heterogeneity]\label{global variance}
The dissimilarity of $f_i(x)$ and $f(x)$ is bounded as follows:
$$
\frac{1}{N}\sum_{i=0}^{N-1}\|\nabla f_i(x)-\nabla f(x)\|^2\le \sigma_g^2
$$
for all $x$. 
\end{assumption}

Assumption \ref{local variance} bounds the variance of stochastic gradient, which is common in stochastic optimization analysis \citep{bubeck2014convex}. Assumption \ref{global variance} bounds the gradient difference between global and local loss function, which is a widely-used approach to characterize client heterogeneity in federated optimization literature \citep{li2018federated,reddi2020adaptive}.

Based on the above assumptions, we derive the following convergence result for FedCM algorithm.
\begin{theorem}
\label{convergence}
Let assumption \ref{smoothness} to \ref{global variance} hold. Assume that in each round, a subset $\mathcal{S}$ with size $|\mathcal{S}|=S$ is sampled uniformly without replacement from $N$ clients. Define $z_t=x_t+\frac{1-\alpha}{\alpha}(x_t-x_{t-1})$.
For any $\alpha\in(0,1]$ and a proper choice of $\eta_g,\eta_l$, the iterates of the algorithm \ref{FedCM alg} satisfy:
\begin{enumerate}
    \item \textbf{Strongly convex:} If assumption \ref{convexity} holds with $\mu>0$, then for $w_t=\frac{(1-\frac{\mu\eta_g}{2})^{-t-1}}{\sum_{t\in[T]}(1-\frac{\mu\eta_g}{2})^{-t-1}}$, we have 
    \begin{align*}
    \mathbb{E}f\left(\sum_{t\in[T]}w_t z_t\right)-f(x^*)
    &=\mathcal{O}\left(\frac{C_1+C_2}{\mu KST}+L\frac{(C_1+KSC_2)}{\alpha^2\mu^2 KST^2}+\mu D e^{-\frac{T}{2}}\right)
    \end{align*}
    
    \item \textbf{General convex:} If assumption \ref{convexity} holds with $\mu=0$, we have
    \begin{align*}
        \mathbb{E}f\left(\frac{1}{T}\sum_{t\in[T]}z_t\right)-f(x^*)
        =\mathcal{O}\left(\sqrt{\frac{D(C_1+C_2)}{KST}}+
        \sqrt[3]{\frac{D^2(C_1+KS C_2)}{\alpha^2 KS T^2}}\right)
    \end{align*}
    
    \item \textbf{Non-convex: } We have 
    \begin{align*}
    \frac{1}{T}\sum_{t\in[T]}\mathbb{E}\|\nabla f(z_t)\|^2
    &=\mathcal{O}\left(\sqrt{\frac{LF(C_1+C_2)}{KST}}+
    \sqrt[3]{\frac{L^2 F^2(C_1+KS C_2)}{\alpha^2 KS T^2}}\right)
\end{align*}
\end{enumerate}
Where $C_1=\sigma_l^2+K(1-\frac{S}{N})\sigma_g^2+KSG^2$,  $C_2=\alpha\left(\frac{\sigma_l^2}{K}+\sigma_g^2+G^2\right)$, $D=\|x_0-x^*\|^2$ for $x^*=\argmin_x f(x)$ and $F=f(x_0)-f^*$ for $f^*=\min_x f(x)$.
\end{theorem}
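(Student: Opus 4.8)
The plan is to reduce the analysis of FedCM to that of a perturbed stochastic gradient method on the auxiliary sequence $z_t=x_t+\frac{1-\alpha}{\alpha}(x_t-x_{t-1})$. First I would establish that $z_t$ obeys a clean update rule: using $x_{t+1}-x_t=-\eta_g\Delta_{t+1}$ together with Lemma~\ref{delta_update} (which rearranges to $\Delta_{t+1}-(1-\alpha)\Delta_t=\alpha\widetilde{\Delta}_t$, where $\widetilde{\Delta}_t=\frac{1}{K|\mathcal{S}|}\sum_{i\in\mathcal{S},\,k\in[K]}g^t_{i,k}$ is the gradient average defined before the lemma), a direct computation yields
\[
z_{t+1}-z_t=\frac{1}{\alpha}(x_{t+1}-x_t)-\frac{1-\alpha}{\alpha}(x_t-x_{t-1})=-\eta_g\widetilde{\Delta}_t .
\]
So $z_t$ runs SGD with step $\eta_g$, except the gradients are taken at the drifted local iterates $x^t_{i,k}$ rather than at $z_t$, and the "minibatch" is formed by sampling $S$ of $N$ clients without replacement. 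Extending the local trajectories hypothetically to all $N$ clients (which makes them independent of the sampling $\mathcal{S}$), $\widetilde{\Delta}_t$ is conditionally unbiased for $\bar g_t:=\frac{1}{NK}\sum_{i\in[N],\,k\in[K]}\nabla f_i(x^t_{i,k})$, so the whole difficulty becomes: control the gap between $\bar g_t$ and $\nabla f(z_t)$, and bound $\mathbb{E}\|\widetilde{\Delta}_t\|^2$.

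Second, I would prove two mutually coupled auxiliary estimates. \emph{(i) A uniform second-moment bound on the momentum.} Since $\Delta_{t+1}=\alpha\widetilde{\Delta}_t+(1-\alpha)\Delta_t$ is a convex combination, $\mathbb{E}\|\Delta_{t+1}\|^2\le\alpha\,\mathbb{E}\|\widetilde{\Delta}_t\|^2+(1-\alpha)\,\mathbb{E}\|\Delta_t\|^2$; expanding $\mathbb{E}\|\widetilde{\Delta}_t\|^2$ via unbiasedness and Assumptions~\ref{bounded gradient}--\ref{global variance}, together with the variance identity for sampling without replacement (this is what introduces the $1-\tfrac{S}{N}$ weighting), gives $\mathbb{E}\|\widetilde{\Delta}_t\|^2=\mathcal{O}\big(G^2+\tfrac{\sigma_l^2}{KS}+\tfrac{(1-S/N)\sigma_g^2}{S}\big)$ plus a constant times the client drift, so an induction yields a uniform bound on $\mathbb{E}\|\Delta_t\|^2$ of the same order. \emph{(ii) A client-drift bound.} From $x^t_{i,k}-x_t=-\eta_l\sum_{j<k}(\alpha g^t_{i,j}+(1-\alpha)\Delta_t)$ and $L$-smoothness one gets $\frac{1}{NK}\sum_{i,k}\mathbb{E}\|x^t_{i,k}-x_t\|^2=\mathcal{O}\big(\eta_l^2K^2(\alpha^2(G^2+\sigma_g^2+\sigma_l^2)+(1-\alpha)^2\mathbb{E}\|\Delta_t\|^2)\big)$, and similarly $\mathbb{E}\|z_t-x_t\|^2=\mathcal{O}\big((\tfrac{1-\alpha}{\alpha})^2\eta_g^2\,\mathbb{E}\|\Delta_t\|^2\big)$. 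Feeding (i) into (ii) closes the coupling and shows the drift is small once $\eta_l$ is of order at most $1/(LK)$; combined with smoothness this produces the drift-to-gradient-error estimate $\mathbb{E}\|\bar g_t-\nabla f(z_t)\|^2\le\frac{L^2}{NK}\sum_{i,k}\mathbb{E}\|x^t_{i,k}-z_t\|^2$, which enters as an additive error in the one-step recursions.

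Finally, I would run the three standard templates on $z_t$. \textbf{Non-convex:} apply the descent lemma $f(z_{t+1})\le f(z_t)-\eta_g\langle\nabla f(z_t),\widetilde{\Delta}_t\rangle+\tfrac{L\eta_g^2}{2}\|\widetilde{\Delta}_t\|^2$, take expectations, use $\langle\nabla f(z_t),\bar g_t\rangle\ge\tfrac12\|\nabla f(z_t)\|^2-\tfrac12\|\bar g_t-\nabla f(z_t)\|^2$, substitute the variance and drift bounds, telescope over $t\in[T]$, and choose $\eta_g$ (and $\eta_l$ inside the drift) to balance the $\mathcal{O}(1/T)$ optimization term against a variance term of order $\sqrt{LF(C_1+C_2)/(KST)}$ and a drift term of order $\sqrt[3]{L^2F^2(C_1+KSC_2)/(\alpha^2 KST^2)}$, with $C_1,C_2$ exactly the combinations recorded in the statement. \textbf{General convex:} replace the descent lemma by the distance recursion $\mathbb{E}\|z_{t+1}-x^*\|^2\le\mathbb{E}\|z_t-x^*\|^2-2\eta_g\,\mathbb{E}[f(z_t)-f^*]+(\text{drift}+\text{variance})$, using convexity to get $\langle\bar g_t,z_t-x^*\rangle\ge f(z_t)-f^*-\|\bar g_t-\nabla f(z_t)\|\,\|z_t-x^*\|$; averaging (with $z_0=x_0$, so $\|z_0-x^*\|^2=D$) and tuning the learning rates gives the claimed rate. \textbf{Strongly convex:} the same recursion acquires a contraction factor $1-\tfrac{\mu\eta_g}{2}$, and summing against the geometric weights $w_t$ turns the telescoped distance into a transient term $\mu D(1-\tfrac{\mu\eta_g}{2})^T$, which for suitable $\eta_g$ is $\mathcal{O}(\mu D e^{-T/2})$, while the remaining terms yield the $\frac{C_1+C_2}{\mu KST}$ and $L\frac{C_1+KSC_2}{\alpha^2\mu^2 KST^2}$ contributions. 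The main obstacle I expect is the self-referential coupling in step two — the momentum second-moment bound depends on the drift and the drift depends on it — together with the multi-parameter learning-rate optimization needed to make the optimization, variance, and drift errors simultaneously small; by contrast the sampling-without-replacement computation is routine, though it must be tracked carefully since it is the source of the $1-S/N$ factor in $C_1$.
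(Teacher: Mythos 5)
Your proposal follows essentially the same route as the paper's proof: the auxiliary sequence $z_t$ with update $z_{t+1}=z_t-\eta_g\widetilde{\Delta}_t$, the hypothetical extension of local trajectories to all $N$ clients to decouple sampling from the drift, the mutually coupled drift and momentum second-moment bounds closed by induction under $\eta_l\lesssim 1/(LK)$, the without-replacement variance computation producing the $1-S/N$ factor, and the three standard one-step recursions (descent lemma, distance recursion, contracting distance recursion with geometric weights) resolved by the same learning-rate balancing. The only cosmetic difference is that in the convex cases the paper invokes a perturbed (strong) convexity inequality applied to each $f_i$ at $x^t_{i,k}$ rather than your Cauchy--Schwarz splitting of $\langle\bar g_t-\nabla f(z_t),z_t-x^*\rangle$, but both yield the same drift-error terms.
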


The proof is deferred to appendix \ref{Omitted proofs}. 

Theorem \ref{convergence} gives the convergence rate of FedCM for strongly convex, general convex and non-convex functions. For a precision requirement of $\epsilon$, We obtain an upper bound on the communication rounds of $\mathcal{O}\left(\frac{1}{\mu KS\epsilon}\right)$ in strongly-convex setting, $\mathcal{O}\left(\frac{1}{KS\epsilon^2}\right)$ in general convex setting, and $\mathcal{O}\left(\frac{L}{KS\epsilon^2}\right)$ in non-convex setting, all of which match the best known results for distributed SGD algorithms \citep{karimireddy2020scaffold}.


The convergence rates of FedCM justify our claim that FedCM is robust to the distributed nature of federated learning. Our result is free of the term $\frac{N}{S}$, which exists in the convergence rate of SCAFFOLD \citep{karimireddy2020scaffold} and FedDyn \citep{acar2021federated} as they require clients to maintain local states. Consequently, if the participation rate $\frac{S}{N}$ tends to $0$, the average loss of SCAFFOLD and FedDyn will tend to infinity, even if the number of participating clients $S$ is fixed. As is shown by our convergence bound, FedCM is free of this issue, which is compatible with our empirical findings that FedCM is stable under different levels of distribution and participation (see section \ref{Results on CIFAR10 and CIFAR100}).

The above convergence results also explain  why FedCM can handle client heterogeneity by clarifying the role the hyperparameter $\alpha$ play in FedCM. The term $C_2$ in the bound increases with $\alpha$. This aligns with our previous explanation that a smaller $\alpha$ implies more global gradient information is used in client updates, which alleviates the influence of client heterogeneity. However, the $\frac{1}{\alpha}$ in the bound means that an extremely small $\alpha$ will slow down the algorithm convergence, which is typical for momentum-like algorithms. Fortunately, the dominating term in the above bounds is $\frac{1}{\alpha}$-free, and therefore a small $\alpha$ will not ruin the asymptotic results of the theorem.

In the above theorem, we obtain results with respect to auxiliary $z_t$ instead of the $x_t$. Note that this only slightly modifies the final output of the algorithm without changing any intermediate procedures. Typically, $z_t$ plays an important role in the analysis of momentum-like algorithms \citep{tong2020effective}. What's more, the results above involve an weighted average of all intermediate results instead of only the final one. This is inevitable if we use a constant learning rate \citep{li2019convergence}. Our proof can be easily modified to avoid this issue by using a decaying learning rate.

\section{Experiments}
\label{sec:experiments}

In this section, we present empirical evaluations of FedCM and competing federated learning methods, to highlight the benefits of introducing client level momentum to federated optimization.

\subsection{Settings}
\textbf{Datasets, Tasks and Models.} We evaluate our method on CIFAR10 and CIFAR100 \citep{krizhevsky2009learning} datasets, with usual train/test splits. In order to provide a thorough evaluation of different methods under various federated scenarios, the experiments are performed on two different federated learning settings: in setting \romanone, we have 100 clients with 10\% participation rate; in setting \romantwo, we have 500 clients, with 2\% participation rate. In each round, each client is activated independently of each other, with probability 0.1 and 0.02 respectively. 
In both settings, we create an iid and non-iid version the training data split. For the iid version, we randomly assign training data to each clients; for the non-iid version, We simulate the data heterogeneity by sampling the label ratios from a Dirichlet distribution with parameter 0.6 \citep{hsu2019measuring}. We keep the training data on each client balanced, i.e., each client holds 500 training data points in setting \uppercase\expandafter{\romannumeral1} and 100 in setting \romantwo.
We adopt a standard Resnet-18 \citep{he2016deep} as our classifier, with batch normalization replaced by group normalization \citep{wu2018group,hsieh2020non}.

\textbf{Methods.} We compare FedCM with the four baseline methods, FedAvg \citep{mcmahan2017communication}, SCAFFOLD \citep{karimireddy2020scaffold},  FedDyn \citep{acar2021federated} and FedAdam \citep{reddi2020adaptive}. Both SCAFFOLD and FedDyn tackle the problem of client heterogeneity, by using a control variate as in SVRG \citep{johnson2013accelerating} or aligning local loss functions with global loss function via maintaining a dual variable. FedAdam extends Adam optimizer \citep{kingma2014adam} to server updates in FedAvg but keeps local updates unchanged, which is closely related to but different from our method. 
In order to provide a fair evaluation of different methods under realistic federated learning settings, we report the test accuracy after 4000 communication rounds in all experiments instead of the training loss.
We provide the implementation details and hyperparameter selections in appendix \ref{Experiment Details}.

\subsection{Results on CIFAR10 and CIFAR100}
\label{Results on CIFAR10 and CIFAR100}
The test accuracy of FedCM and competing baselines on CIFAR10 and CIFAR100 under different settings are given in Table \ref{CIFAR10results} and \ref{CIFAR100results}. The corresponding convergence plots are provided in appendix \ref{Experiment Details}. We observe that FedCM consistently outperforms other strong baselines in different tasks. In particular, despite that FedCM does not use adaptive learning rate, it still achieves higher accuracy than FedAdam, which demonstrates the effectiveness of client momentum over server momentum.

We find that FedCM is robust to different levels of participation.  As shown in the table, the test accuracy of FedAvg and SCAFFOLD drop by approximately 8\% and 6\% respectively on CIFAR10 when we decrease the participation rate. By comparison, the test accuracy drop is much less severe for FedCM. For instance, the accuracy drop of FedCM on CIFAR10 is 1.07\% for IID split and 1.37\% for Dirichlet-0.6 split.
We attribute the drop of SCAFFOLD  to the local states that SCAFFOLD maintains at client side. This corresponds to our theoretical findings in section \ref{Convergence analysis}. In contrast, FedCM avoids this issue and is robust to different participation levels.
Note that only the number of active clients $S$, instead of total number of clients $N$, appears in the dominating term in Theorem \ref{convergence} and the convergence bound of competing algorithms \citep{karimireddy2020scaffold,reddi2020adaptive, acar2021federated}, so in our experiment the expected number of active clients is fixed to 10, in order for a fair analysis of the effect of participation rate. 

The experiment results also support our claim that FedCM is more robust to client heterogeneity. Not only does FedCM achieves highest test accuracy under different heterogeneity levels, we can also observe that the accuracy gap of FedCM between IID and Dirichlet-0.6 split is smaller compared to methods without client level modification. Taking CIFAR10 dataset with 100 clients and 10\% participation as an example, the accuracy drop of FedCM is 0.31\%, smaller than 0.77\% of FedAdam and 0.66\% of FedAvg. 

To sum up, the experiments on CIFAR10 and CIFAR100 under different settings align with our theoretical findings that FedCM is able to tackle the challenges of partial participation and client heterogeneity in federated learning tasks, by appropriately incorporating global information into client gradient updates.

\begin{table*}[ht]
   \centering
    \caption{The test accuracy of different methods on CIFAR10. Setting \romanone: 100 clients, 10\% participation. Setting \romantwo: 500 clients, 2\% participation.}
  \footnotesize
  \adjustbox{max width=.95\textwidth}{
  \begin{tabular}{cccccccc}
    \toprule[1pt]
    \multirow{2}{*}{Setting} & 
    \multirow{2}{*}{Dataset} &
    \multicolumn{4}{c}{Test Accuracy(\%)}\\
   & & FedCM & FedAvg & FedAdam & SCAFFOLD & FedDyn\\
   \hline
   
   \multirow{2}{*}{\romanone} & IID & \textbf{87.92} & 82.80 & 87.54 & 85.41 & 85.51 \\
   & Dirichlet-0.6 &\textbf{87.61} & 82.14 & 86.77 & 84.62 & 85.14\\
   \hline
   
   \multirow{2}{*}{\romantwo} & IID & \textbf{86.85} & 74.72 & 85.25 & 79.19 & 83.39 \\
   & Dirichlet-0.6 & \textbf{86.24} & 73.93 & 84.62 & 78.59 & 82.25\\

\bottomrule[1pt]
  \end{tabular}
  }
\label{CIFAR10results}
\end{table*}

\begin{table*}[ht]
   \centering
    \caption{The test accuracy of different methods on CIFAR100. Setting \romanone: 100 clients, 10\% participation. Setting \romantwo: 500 clients, 2\% participation.}
  \footnotesize
  \adjustbox{max width=.95\textwidth}{
  \begin{tabular}{cccccccc}
    \toprule[1pt]
    \multirow{2}{*}{Setting} & 
    \multirow{2}{*}{Dataset} &
    \multicolumn{4}{c}{Test Accuracy(\%)}\\
   & & FedCM & FedAvg & FedAdam & SCAFFOLD & FedDyn\\
   \hline
   
   \multirow{2}{*}{\romanone} & IID & \textbf{58.16} & 49.18 & 54.91 & 55.68 & 53.52 \\
   & Dirichlet-0.6 & \textbf{57.96} & 47.76 & 54.67 & 55.31 & 52.95\\
   \hline
   
   \multirow{2}{*}{\romantwo} & IID & \textbf{56.68} & 40.93 & 52.31 & 47.91 & 48.19 \\
   & Dirichlet-0.6 & \textbf{56.64} & 40.08 & 52.24 & 47.71 & 47.98\\

\bottomrule[1pt]
  \end{tabular}
  }
\label{CIFAR100results}
\end{table*}

\subsection{Sensitivity Analysis of $\alpha$ in FedCM}
As shown in our theoretical analysis, $\alpha$ reduces the effect of client heterogeneity by balancing the global information and local information in client gradient updates. To validate this, we also perform experiments to analyze the effect of $\alpha$, the only algorithm-dependent hyperparameter of FedCM, on the convergence and performance of FedCM algorithms. 

We test FedCM with $\alpha$ taken values in $\{0.01,0.03,0.05,0.1,0.3,1.0\}$, on CIFAR10 datasets with Dirichlet-0.6 split on 100 clients, 10\% participating setting. The test accuracies are shown in Table \ref{alphatable} and the convergence plots are provided in Figure \ref{alphafigure}. We find that FedCM successfully converges to stationary points under all these $\alpha$ choices, as guaranteed by our convergence analysis. However, the stationary points of different $\alpha$ show different generalization ability, which results in varying test accuracies in Table \ref{alphatable}. We note that setting $\alpha$ too small or too large will harm the convergence and generalization of FedCM. As shown in Figure \ref{alphafigure}, FedCM will suffer from oscillation and slow convergence when $\alpha=0.01$. Despite this, FedCM with $\alpha<1 $ consistently outperforms FedAvg corresponding to $\alpha=1.0$, which supports the effectiveness of client level momentum. Empirically, we find that performance is best when setting $\alpha$ to about $0.1$, which aligns with traditional momentum algorithms.

\begin{table*}[ht]
   \centering
    \caption{The test accuracy of FedCM with different $\alpha$.}
  \footnotesize
  \adjustbox{max width=.95\textwidth}{
  \begin{tabular}{cccccccc}
    \toprule[1pt]

   \multirow{1}{*}{$\alpha$} & 0.01 & 0.03 & 0.05 & 0.1 & 0.3  &1.0\\

   \hline
   
   \multirow{1}{*}{test accuracy (\%)} & 85.93 & 86.55 & 87.50 & 87.61 & 85.90 & 82.14 \\
  
\bottomrule[1pt]
  \end{tabular}
  }
\label{alphatable}
\end{table*}

\begin{figure}[ht]
\centering
\includegraphics[scale=0.5]{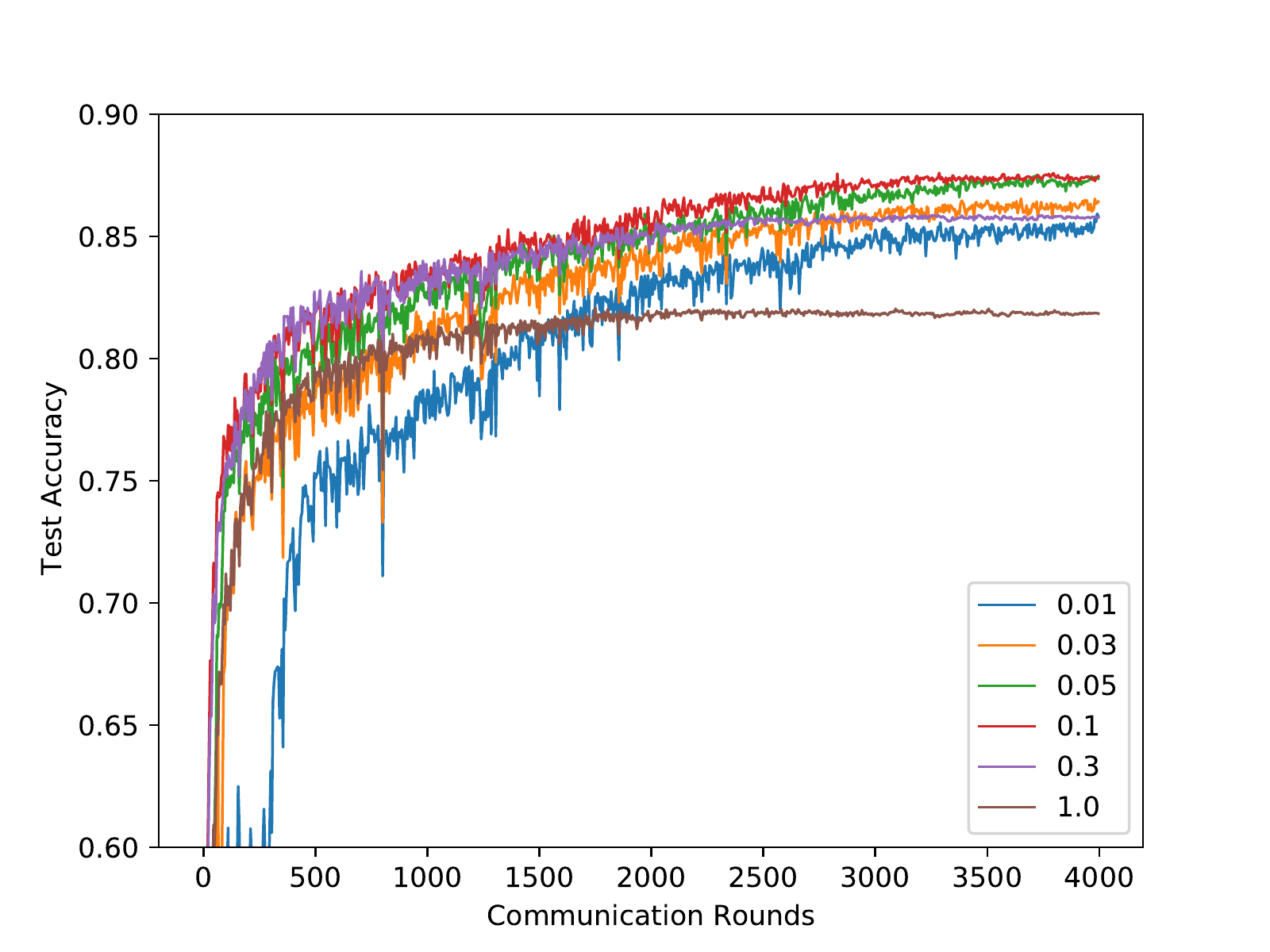}
\caption{The convergence plot of FedCM with different $\alpha$.}
\label{alphafigure}
\end{figure}

\section{Conclusion}
\label{Conclusion}
In this paper, we propose FedCM, a novel, efficient and robust federated learning algorithm which modifies local gradient update with a momentum term that aggregates global gradient information. We give the convergence rates of FedCM that match the best known results of distributed gradient algorithms. We also conduct extensive experiments to demonstrate that FedCM outperforms existing federated learning algorithms under various scenarios. Our theoretical and empirical evaluations highlight the robustness of FedCM to client heterogeneity and low participation in federated learning tasks. This work explores an approach to aggregate the past gradient to modify client gradient descent directions. Future directions include utilizing higher order information of past gradients to extend adaptive optimization algorithms into client and server updates of Federated Averaging.

\bibliography{ref}
\bibliographystyle{abbrvnat}

\newpage
\appendix

\section{Additional related works}
\textbf{MimeLite.} MimeLite \citep{karimireddy2020mime} is an algorithmic framework proposed in a a recent work, which adapts centralized optimization algorithm to federated learning setting. In MimeLite, server statistics such as momentum are applied to client gradient steps in order to alleviate client heterogeneity. MimeLite can be further generalized into Mime, by introducing control variate to handle client heterogeneity as in SCAFFOLD algorithm. FedCM and MimeLite share some similarities as both methods aim to tackle client heterogeneity by using a global momentum term to modify local gradient steps.

The major difference between FedCM and MimeLite is the way in which momentum term is computed and maintained. 
Slightly abusing the notations in algorithm \ref{FedCM alg}, the momentum term in MimeLite is updated using additional full batch gradient $\nabla f_i(x^t_{i,0})$, while the momentum in FedCM is updated using an average of the minibatch gradient $\{\nabla f_i(x^t_{i,k},\xi^t_{i,k})\}_{k\in[K]}$. 

The way FedCM computes momentum has two major strengths. Firstly, MimeLite requires additional client computation cost and client-to-server communication burden to calculate and transmit the full batch gradient. FedCM is free of this issue by incorporating the update of the momentum term into the aggregation of client models, which is more efficient and light-weight. 
Secondly, the momentum in MimeLite is computed on the previously synchronized model $x^t_{i,0}$, which will get stale if the number of local gradient steps is large. By contrast, FedCM utilizes all the model parameters in the optimization trajectory $\{x^t_{i,k}\}_{k\in[K]}$ to compute the momentum, which are closer to the current model parameter $x^{t+1}$ and thus more informative.

\textbf{SLOWMO.} SLOWMO \citep{wang2019slowmo} is a momentum-type algorithm to solve distributed optimization problem. The major characteristic of SLOWMO is that an additional momentum step is applied to the averaged model parameters in each communication round, to improve the convergence of distributed training. SLOWMO uses the difference between consecutive server model parameters to update momentum, which shares a similar strategy with FedDyn. However, momentum in SLOWMO is not applied to modify client gradient updates, which differs from FedCM.

\textbf{QG-DSGDm.}  QG-DSGDm is a momentum-based decentralized optimization method proposed in \cite{lin2021quasi}. For each client, QG-DSGDm maintains a quasi-global momentum term by averaging the model updates of neighbouring clients. The authors show that incorporating quasi-global momentum term into local gradient steps can reduce data heterogeneity and stabilize convergence. The biggest difference between QG-DSGDm and FedCM is that QG-DSGDm focuses on decentralized setting and FedCM handles centralized setting. As a consequence, each client in QG-DSGDm holds its own momentum which is synchronized occasionally with its neighbouring clients, while in FedCM the server holds a unique momentum term which is applied to all participating clients.

\section{Proofs}
\label{Omitted proofs}

\subsection{Preliminary lemmas}
\label{Preliminary lemmas}
\begin{lemma}\label{relaxed triangle1}
For $v_1,v_2,\cdots, v_n\in \mathbb{R}^d$, we have
$$\|\sum_{i=1}^n v_i\|^2\le n\sum_{i=1}^n\|v_i\|^2$$
\end{lemma}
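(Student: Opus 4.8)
The plan is to reduce the claim to two applications of the Cauchy--Schwarz inequality, one at the level of vectors in $\mathbb{R}^d$ and one at the level of scalars. First I would expand the squared norm via the inner product:
$$
\left\|\sum_{i=1}^n v_i\right\|^2 = \left\langle \sum_{i=1}^n v_i, \sum_{j=1}^n v_j\right\rangle = \sum_{i=1}^n\sum_{j=1}^n \langle v_i, v_j\rangle.
$$
Next I would bound each term $\langle v_i, v_j\rangle \le \|v_i\|\,\|v_j\|$ by Cauchy--Schwarz in $\mathbb{R}^d$, which collapses the double sum into $\bigl(\sum_{i=1}^n \|v_i\|\bigr)^2$.

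The second and final step is to control $\bigl(\sum_{i=1}^n \|v_i\|\bigr)^2$ by $n\sum_{i=1}^n \|v_i\|^2$; this is Cauchy--Schwarz applied to the constant vector $(1,\dots,1)$ and the vector $(\|v_1\|,\dots,\|v_n\|)$ in $\mathbb{R}^n$, i.e.\ $\sum_i \|v_i\| \le \sqrt{n}\,\sqrt{\sum_i \|v_i\|^2}$, squared. Chaining the two bounds yields the claim. An equivalent route, which I would mention as an alternative, is to invoke convexity of the map $u \mapsto \|u\|^2$ together with Jensen's inequality: $\bigl\|\tfrac1n\sum_i v_i\bigr\|^2 \le \tfrac1n\sum_i \|v_i\|^2$, and then multiply through by $n^2$.

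There is no real obstacle here — the statement is a textbook consequence of Cauchy--Schwarz (or Jensen). The only "choice" to make is which classical inequality to cite as the workhorse; I would go with the inner-product expansion since it keeps the argument self-contained and makes the constant $n$ transparent as arising from the dimension of the scalar Cauchy--Schwarz step. If desired, one can also give a short induction on $n$ using $\|a+b\|^2 \le 2\|a\|^2 + 2\|b\|^2$, but that produces a worse constant unless done carefully, so the direct argument is cleaner.
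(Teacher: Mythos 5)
Your proof is correct: the inner-product expansion plus two applications of Cauchy--Schwarz (or, equivalently, Jensen applied to the convex map $u\mapsto\|u\|^2$) gives exactly the stated bound with the constant $n$. The paper itself does not prove this lemma but defers to \cite{karimireddy2020scaffold}, where the standard argument is the same Jensen/Cauchy--Schwarz one you give, so your write-up is a faithful and self-contained version of the intended proof.
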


\begin{lemma}\label{relaxed triangle2}
For $v_1,v_2\in\mathbb{R}^d$, we have $$\|v_1+v_2\|^2\le(1+a)\|v_1\|^2+\left(1+\frac{1}{a}\right)\|v_2\|^2$$
\end{lemma}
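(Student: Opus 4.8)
The plan is to prove the inequality $\|v_1+v_2\|^2\le(1+a)\|v_1\|^2+(1+\tfrac{1}{a})\|v_2\|^2$ by expanding the square and controlling the cross term with Young's inequality. First I would write
\begin{equation*}
\|v_1+v_2\|^2=\|v_1\|^2+2\langle v_1,v_2\rangle+\|v_2\|^2,
\end{equation*}
so that the entire task reduces to bounding the inner-product term $2\langle v_1,v_2\rangle$ from above by $a\|v_1\|^2+\tfrac{1}{a}\|v_2\|^2$. Matching this against the claimed right-hand side shows that $(1+a)\|v_1\|^2+(1+\tfrac1a)\|v_2\|^2=\|v_1\|^2+\|v_2\|^2+a\|v_1\|^2+\tfrac1a\|v_2\|^2$, confirming that establishing the cross-term bound is exactly what is needed.

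The key step is Young's inequality in the form $2\langle v_1,v_2\rangle\le a\|v_1\|^2+\tfrac1a\|v_2\|^2$, valid for any $a>0$. I would derive this from the trivial fact that a squared norm is nonnegative: applying this to the vector $\sqrt{a}\,v_1-\tfrac{1}{\sqrt a}v_2$ gives
\begin{equation*}
0\le\Bigl\|\sqrt a\,v_1-\tfrac{1}{\sqrt a}v_2\Bigr\|^2=a\|v_1\|^2-2\langle v_1,v_2\rangle+\tfrac1a\|v_2\|^2,
\end{equation*}
which rearranges immediately into the desired cross-term bound. Substituting this back into the expansion of $\|v_1+v_2\|^2$ yields the claim.

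There is essentially no serious obstacle here; the only point requiring minor care is the implicit hypothesis $a>0$, which is needed for $\sqrt a$ and $\tfrac1a$ to be well defined and for the direction of the inequality to hold (the statement is used in the paper with positive constants $a$). I would note that Lemma~\ref{relaxed triangle1} is the special case $a=1$ applied to two vectors, so this lemma is a natural refinement allowing an asymmetric split of the two norm contributions, which is what makes it useful in the convergence analysis when one term must be weighted much more heavily than the other.
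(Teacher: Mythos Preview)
Your argument is correct and is exactly the standard proof: expand the square and bound the cross term via Young's inequality $2\langle v_1,v_2\rangle\le a\|v_1\|^2+\tfrac{1}{a}\|v_2\|^2$ for $a>0$. The paper does not actually supply its own proof of this lemma---it simply refers the reader to \cite{karimireddy2020scaffold}---so there is nothing further to compare; your derivation (including the remark that $a>0$ is implicitly required) is the intended one.
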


\begin{lemma}\label{seperating mean and variance}
Let $X_1,X_2\cdots X_n$ be random variables in $ \mathbb{R}^d$. Suppose that  $\{X_i-\xi_i\}$ form a martingale difference sequence, i.e. $\mathbb{E}[X_i-\xi_i|X_1\cdots X_{i-1}]=0$. If $\mathbb{E}\|X_i-\xi_i\|^2\le \sigma^2$, then we have
$$
\mathbb{E}\|\sum_{i=1}^n X_i\|^2\le 2\|\sum_{i=1}^n \xi_i\|^2+2n\sigma^2
$$
\end{lemma}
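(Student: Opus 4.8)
The plan is to isolate the ``mean'' contribution $\sum_i \xi_i$ from the ``variance'' contribution by writing each term as $X_i = \xi_i + D_i$ with $D_i := X_i - \xi_i$, so that $\sum_{i=1}^n X_i = \sum_{i=1}^n \xi_i + \sum_{i=1}^n D_i$ and $\{D_i\}$ is exactly the given martingale difference sequence. Applying Lemma \ref{relaxed triangle2} with $a=1$ (equivalently the crude bound $\|u+v\|^2 \le 2\|u\|^2 + 2\|v\|^2$) and taking expectations immediately gives
$$\mathbb{E}\Big\|\sum_{i=1}^n X_i\Big\|^2 \le 2\Big\|\sum_{i=1}^n \xi_i\Big\|^2 + 2\,\mathbb{E}\Big\|\sum_{i=1}^n D_i\Big\|^2.$$
The entire content of the lemma then reduces to the orthogonality estimate $\mathbb{E}\|\sum_{i=1}^n D_i\|^2 \le n\sigma^2$.

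To establish this, I would expand the square into diagonal and cross terms,
$$\mathbb{E}\Big\|\sum_{i=1}^n D_i\Big\|^2 = \sum_{i=1}^n \mathbb{E}\|D_i\|^2 + 2\sum_{i<j}\mathbb{E}\langle D_i, D_j\rangle.$$
Each diagonal term is at most $\sigma^2$ by the assumed second-moment bound $\mathbb{E}\|X_i - \xi_i\|^2 \le \sigma^2$, contributing $n\sigma^2$ in total. The crux is showing that every cross term vanishes. For $i<j$, the term $D_i$ is measurable with respect to the history $X_1,\dots,X_{j-1}$, so conditioning on that history and using the tower rule yields $\mathbb{E}\langle D_i, D_j\rangle = \mathbb{E}\big[\langle D_i,\, \mathbb{E}[D_j \mid X_1,\dots,X_{j-1}]\rangle\big] = 0$, since the martingale-difference hypothesis gives $\mathbb{E}[D_j \mid X_1,\dots,X_{j-1}] = 0$. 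Combining the diagonal bound with the vanishing cross terms gives $\mathbb{E}\|\sum_i D_i\|^2 \le n\sigma^2$, and substituting back into the display above finishes the proof.

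The main point requiring care is the measurability bookkeeping in the cross-term step, which is the only place where the martingale structure (as opposed to mere pairwise uncorrelatedness) is actually used. I need $D_i$ with $i<j$ to be measurable with respect to the conditioning $\sigma$-algebra generated by $X_1,\dots,X_{j-1}$, so that it can be pulled outside the inner conditional expectation of $D_j$. This holds because the hypothesis conditions on $X_1,\dots,X_{i-1}$, which forces $\xi_i$, and hence $D_i = X_i - \xi_i$, to be a function of $X_1,\dots,X_i$; this is contained in the history up to index $j-1$ precisely when $i \le j-1$. I would state this measurability fact explicitly before invoking the tower property, as it is the subtle hinge of the argument.
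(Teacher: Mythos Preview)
Your argument is correct and is exactly the standard proof: split $\sum X_i$ into $\sum \xi_i + \sum D_i$, apply the relaxed triangle inequality (Lemma~\ref{relaxed triangle2} with $a=1$), and then use the orthogonality of the martingale differences to bound $\mathbb{E}\|\sum D_i\|^2$ by $n\sigma^2$. The paper itself does not supply a proof for this lemma; it explicitly defers to \cite{karimireddy2020scaffold}, where the same argument appears, so there is nothing to compare against beyond noting that your proof matches the intended one.
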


\begin{lemma}\label{perturbed strong convexity} For $\mu$-strongly convex and  $L$-smooth function $f$, and any $x,y,z$ in its domain, the following is true:
\begin{align*}
    \left<\nabla f(x),y-z\right>\le f(y)-f(z)-\frac{\mu}{4}\|y-z\|^2+L\|z-x\|^2
\end{align*}

\end{lemma}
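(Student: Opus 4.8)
The plan is to split the inner product around the reference point $x$, apply the one-sided quadratic bounds coming separately from strong convexity and from smoothness, and then reconcile the mismatched distances with a triangle inequality. First I would write $y-z=(y-x)+(x-z)$, so that
$$\langle \nabla f(x),y-z\rangle=\langle \nabla f(x),y-x\rangle+\langle \nabla f(x),x-z\rangle.$$
For the first summand, strong convexity applied to the pair $(x,y)$ gives $\langle \nabla f(x),y-x\rangle\le f(y)-f(x)-\frac{\mu}{2}\|y-x\|^2$. For the second summand, the standard descent-lemma consequence of $L$-smoothness, $f(z)\le f(x)+\langle \nabla f(x),z-x\rangle+\frac{L}{2}\|z-x\|^2$, rearranges to $\langle \nabla f(x),x-z\rangle\le f(x)-f(z)+\frac{L}{2}\|z-x\|^2$.

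Adding the two bounds, the $\pm f(x)$ contributions cancel and I obtain the intermediate estimate
$$\langle \nabla f(x),y-z\rangle\le f(y)-f(z)-\frac{\mu}{2}\|y-x\|^2+\frac{L}{2}\|z-x\|^2.$$
The remaining task is purely to convert the $-\frac{\mu}{2}\|y-x\|^2$ term, which references the wrong pair of points, into the desired $-\frac{\mu}{4}\|y-z\|^2$. For this I apply Lemma \ref{relaxed triangle1} with $n=2$ to the decomposition $y-z=(y-x)+(x-z)$, which yields $\|y-z\|^2\le 2\|y-x\|^2+2\|z-x\|^2$, hence $-\frac{\mu}{2}\|y-x\|^2\le -\frac{\mu}{4}\|y-z\|^2+\frac{\mu}{2}\|z-x\|^2$. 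Substituting collects every $\|z-x\|^2$ contribution into a single coefficient $\frac{L}{2}+\frac{\mu}{2}$.

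The one fact that makes this coefficient close up exactly to the claimed $L$ is $\mu\le L$, which holds automatically for any function that is simultaneously $\mu$-strongly convex and $L$-smooth, since the spectrum of its Hessian lies in $[\mu,L]$. This gives $\frac{L}{2}+\frac{\mu}{2}\le L$, so $\frac{L}{2}\|z-x\|^2+\frac{\mu}{2}\|z-x\|^2\le L\|z-x\|^2$, completing the argument. I expect the only places needing care are choosing the orientation of the triangle inequality so that the cross term lands with the favorable (positive) sign on $\|z-x\|^2$, and remembering to invoke $\mu\le L$ at the end; everything else is routine bookkeeping with no heavy computation.
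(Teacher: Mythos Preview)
Your argument is correct and is essentially the standard proof from \cite{karimireddy2020scaffold}, which the paper itself cites in lieu of giving its own derivation: split $y-z$ through $x$, apply strong convexity and the descent lemma respectively, then use the relaxed triangle inequality and $\mu\le L$ to absorb the cross term. The only cosmetic quibble is that your justification of $\mu\le L$ via the Hessian spectrum tacitly assumes twice differentiability; the same conclusion follows directly from the first-order definitions by noting $\frac{\mu}{2}\|y-x\|^2\le f(y)-f(x)-\langle\nabla f(x),y-x\rangle\le\frac{L}{2}\|y-x\|^2$.
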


We refer the readers to \cite{karimireddy2020scaffold} for a detailed proof of the above lemmas. The following two lemmas are also adapted from \cite{karimireddy2020scaffold}, which we will apply in the proof to unroll the recursion.

\begin{lemma}\label{recusion}
Let 
$$
a_t=\frac{1}{\eta}\left[\left(1-\mu \eta\right)b_t-b_{t+1}\right]+c_1\eta+c_2\eta^2
$$
for a non-negative sequence $\{b_n\}_{n\ge 0}$, constants $c_1,c_2\ge 0, \mu>0$ and parameters $\eta>0$. Then for any $T\in\mathbb{N}_+$, there exists a constant step-size $\eta<\frac{1}{\mu}$ and weights $w_t=\frac{(1-\mu\eta)^{-t-1}}{\sum_{t\in[T]}(1-\mu\eta)^{-t-1}}$ such that 
\begin{align*}
    \sum_{t\in[T]} w_t a_t=\mathcal{O}(\mu b_0 e^{-\frac{T}{2}}+\frac{c_1}{\mu T}+\frac{c_2}{\mu^2 T^2})
\end{align*}
\end{lemma}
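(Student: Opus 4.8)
The plan is to exploit the special geometric form of the weights $w_t\propto(1-\mu\eta)^{-t-1}$, which is engineered precisely so that the leading part of $a_t$ telescopes, thereby reducing the entire statement to tuning a single scalar function of the step-size $\eta$.

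First I would substitute the definition of $a_t$ into $\sum_{t\in[T]}w_t a_t$ and split it into the telescoping part coming from $\frac{1}{\eta}[(1-\mu\eta)b_t-b_{t+1}]$ and the constant part $c_1\eta+c_2\eta^2$. Writing $r:=(1-\mu\eta)^{-1}>1$ (which requires $\eta<\frac1\mu$) and $W:=\sum_{t\in[T]}r^{t+1}$ for the normalizer, the key identity $(1-\mu\eta)=r^{-1}$ gives
\[
r^{t+1}\cdot\frac1\eta\bigl[(1-\mu\eta)b_t-b_{t+1}\bigr]=\frac1\eta\bigl(r^{t}b_t-r^{t+1}b_{t+1}\bigr),
\]
so the sum over $t=0,\dots,T-1$ collapses to $\frac1\eta(b_0-r^Tb_T)\le\frac{b_0}{\eta}$ using $b_T\ge0$, while the constant part contributes exactly $W(c_1\eta+c_2\eta^2)$. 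Dividing by $W$ yields $\sum_{t\in[T]}w_ta_t\le\frac{b_0}{\eta W}+c_1\eta+c_2\eta^2$.

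Next I would lower-bound the normalizer by its largest term, $W\ge r^T=(1-\mu\eta)^{-T}$, so that $\frac{1}{\eta W}\le\frac1\eta(1-\mu\eta)^T\le\frac1\eta e^{-\mu\eta T}$ via $1-x\le e^{-x}$. This reduces the lemma to showing that the scalar function $g(\eta)=\frac{b_0}{\eta}e^{-\mu\eta T}+c_1\eta+c_2\eta^2$ can be driven to $\mathcal{O}(\mu b_0e^{-T/2}+\frac{c_1}{\mu T}+\frac{c_2}{\mu^2T^2})$ by a single admissible $\eta$. I would take $\eta=\min\bigl\{\frac{1}{2\mu},\,\frac{\ln(\max(e,\mu^2b_0T^2/c_1))}{\mu T}\bigr\}$ and split into two cases. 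When the logarithmic candidate is active, writing $L$ for its numerator, the bound $e^{-L}\le\frac{c_1}{\mu^2b_0T^2}$ makes the first term at most $\frac{c_1}{L\mu T}\le\frac{c_1}{\mu T}$, while $c_1\eta=\frac{c_1L}{\mu T}$ and $c_2\eta^2=\frac{c_2L^2}{\mu^2T^2}$ pick up only the logarithmic factor $L$, which $\mathcal{O}$ absorbs. When the cap $\eta=\frac{1}{2\mu}$ is active, the first term becomes exactly $2\mu b_0e^{-T/2}$, matching the claimed exponential term; moreover the defining inequality $\frac{T}{2}<\ln(\max(e,\mu^2b_0T^2/c_1))$ forces $T$ to be at most logarithmic in $\mu^2b_0/c_1$, so $\frac1T$ and $\frac1{T^2}$ are $\mathcal{O}(1)$ and the residual terms $c_1\eta=\frac{c_1}{2\mu}$ and $c_2\eta^2=\frac{c_2}{4\mu^2}$ are swallowed by $\mathcal{O}(\frac{c_1}{\mu T})$ and $\mathcal{O}(\frac{c_2}{\mu^2T^2})$.

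The telescoping bookkeeping and the lower bound on $W$ are routine; the genuine obstacle is the step-size tuning in the last step. The delicate points are (i) choosing the single $\eta$ so that all three target terms surface simultaneously, and (ii) checking that in the capped regime the polynomial-in-$T$ slack is swallowed by the logarithmic looseness of $\mathcal{O}$ together with the exponential decay---exactly the balancing argument used for such recursions in \citep{karimireddy2020scaffold}, which I would follow.
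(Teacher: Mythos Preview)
Your proposal is correct and follows essentially the same telescoping-then-tuning strategy as the paper's proof. The only cosmetic differences are that the paper evaluates the geometric sum $W$ exactly (obtaining $\frac{\mu(1-\mu\eta)^T}{1-(1-\mu\eta)^T}b_0\le 2\mu b_0e^{-\mu\eta T}$ in place of your cruder $\frac{b_0}{\eta}e^{-\mu\eta T}$) and uses a slightly different logarithmic expression inside the $\min$ defining $\eta$, but the structure and two-regime case analysis are the same.
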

\begin{proof}
We substitute the value of $a_t$ and merge the telescoping sum as
\begin{align*}
    \sum_{t\in[T]} w_t a_t&=\sum_{t\in[T]} \frac{1}{\eta\sum_{t\in[T]}(1-\mu\eta)^{-t-1}}\left[\left(1-\mu \eta\right)^{-t}b_t-\left(1-\mu \eta\right)^{-t-1}b_{t+1}\right]+c_1\eta+c_2\eta^2\\
    &\le \frac{1}{\eta\sum_{t\in[T]}(1-\mu\eta)^{-t-1}}b_0+c_1\eta+c_2\eta^2\\
    &=\frac{\mu(1-\mu\eta)^T}{1-(1-\mu\eta)^T}b_0+c_1\eta+c_2\eta^2\\
    &\le 2\mu b_0 e^{-\mu\eta T}+c_1\eta+c_2\eta^2
\end{align*}

The last step follows from $(1-\mu\eta)^T\le e^{-\mu\eta T}$ and $e^{-\mu\eta T}\le e^{-1}\le \frac{1}{2}$. Take $\eta=\min\left(\frac{1}{\mu T}\log\frac{\mu^2 b_0^2 T}{c_1},\frac{1}{2\mu}\right)$, we get

\begin{align*}
    \sum_{t\in[T]} w_t a_t&=\mathcal{O}\left(
    \mu b_0 e^{-\mu\frac{1}{2\mu}T}+\mu b_0 e^{-\mu \frac{1}{\mu T}\log\frac{\mu^2 b_0^2 T}{c_1}T }+\frac{c_1}{\mu T}+\frac{c_2}{\mu^2 T^2}\right)\\
    &=\mathcal{O}\left(\mu b_0 e^{-\frac{T}{2}}+\frac{c_1}{\mu T}+\frac{c_2}{\mu^2 T^2}\right)
\end{align*}
\end{proof}

\begin{lemma}\label{recusion2}
Let 
$$
a_t=\frac{1}{\eta}\left(b_t-b_{t+1}\right)+c_1\eta+c_2\eta^2
$$
for a non-negative sequence $\{b_n\}_{n\ge 0}$, constants $c_1,c_2\ge 0$ and parameters $\eta>0$. Then for any $T\in\mathbb{N}_+$, there exists a constant step-size $\eta$  such that 
\begin{align*}
    \frac{1}{T}\sum_{t\in[T]} a_t=\mathcal{O}(\sqrt{\frac{b_0 c_1}{T}}+
    \sqrt[3]{\frac{b_0^2 c_2}{T^2}})
\end{align*}
\end{lemma}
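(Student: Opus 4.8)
The plan is to reduce the entire statement to an elementary telescoping estimate followed by a one-parameter optimization over the step size $\eta$. First I would substitute the definition of $a_t$ and split the average into three pieces:
$$
\frac{1}{T}\sum_{t\in[T]} a_t = \frac{1}{\eta T}\sum_{t=0}^{T-1}(b_t-b_{t+1}) + c_1\eta + c_2\eta^2 .
$$
The first sum telescopes to $b_0 - b_T$, and since the sequence $\{b_n\}$ is non-negative we may drop $-b_T\le 0$ to obtain the clean upper bound
$$
\frac{1}{T}\sum_{t\in[T]} a_t \le \frac{b_0}{\eta T} + c_1\eta + c_2\eta^2 .
$$
Denote the right-hand side by $\phi(\eta)$. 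This turns the problem into choosing a single constant $\eta>0$ that makes $\phi(\eta)$ as small as the claimed rate. Note that, in contrast to Lemma \ref{recusion}, there is no $(1-\mu\eta)$ factor here, so no upper cap on $\eta$ is structurally required and I am free to optimize over all $\eta>0$.

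The key idea for the optimization is to balance the decreasing term $b_0/(\eta T)$ separately against each of the two increasing terms. Balancing against $c_1\eta$ gives the candidate $\eta_1=\sqrt{b_0/(c_1 T)}$, at which both terms equal $\sqrt{b_0 c_1/T}$; balancing against $c_2\eta^2$ gives $\eta_2=(b_0/(c_2 T))^{1/3}$, at which both terms equal $\sqrt[3]{b_0^2 c_2/T^2}$. I would then set $\eta=\min(\eta_1,\eta_2)$. With this choice the two increasing terms are each controlled by their own balancing value, since $c_1\eta\le c_1\eta_1=\sqrt{b_0 c_1/T}$ and $c_2\eta^2\le c_2\eta_2^2=\sqrt[3]{b_0^2 c_2/T^2}$.

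The one step that needs a little care is the decreasing term $b_0/(\eta T)$, because $\eta$ sits in the denominator and is itself a minimum of two quantities. The clean observation is that $1/\min(\eta_1,\eta_2)\le 1/\eta_1+1/\eta_2$, so
$$
\frac{b_0}{\eta T} \le \frac{b_0}{\eta_1 T} + \frac{b_0}{\eta_2 T} = \sqrt{\frac{b_0 c_1}{T}} + \sqrt[3]{\frac{b_0^2 c_2}{T^2}},
$$
each summand matching the corresponding balancing value by construction. Adding the three bounds yields $\phi(\eta)\le 2\sqrt{b_0 c_1/T}+2\sqrt[3]{b_0^2 c_2/T^2}$, which is exactly the claimed $\mathcal{O}$-rate. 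The main obstacle, such as it is, is handling this reciprocal-of-a-minimum cleanly; everything else is direct substitution. I would close by noting the degenerate cases: if $c_1=0$ (resp.\ $c_2=0$) the corresponding balancing point is vacuous and one keeps only the other term, and if both vanish the bound is trivial, so the formula holds with the convention that absent terms are omitted.
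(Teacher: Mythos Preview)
Your proof is correct and follows essentially the same approach as the paper: telescope to obtain $\frac{b_0}{\eta T}+c_1\eta+c_2\eta^2$, then choose $\eta=\min\big(\sqrt{b_0/(c_1T)},\,(b_0/(c_2T))^{1/3}\big)$. The paper simply states this choice and asserts the result; your write-up fills in the details (the reciprocal-of-a-minimum step and the degenerate cases) that the paper leaves implicit.
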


\begin{proof}
Unrolling the sum, we get 
$$
\frac{1}{T}\sum_{t\in[T]} a_t
\le\frac{1}{\eta T}b_0+c_1\eta+c_2\eta^2
$$
Take $\eta = \min\left(\sqrt{\frac{b_0}{c_1 T}},\sqrt[3]{\frac{b_0}{c_2 T}}\right)$ and the result follows.
\end{proof}

\subsection{Proof of the main theorem}
\label{Proof of the main theorem}
\label{sec:proof}
First, we define the auxiliary sequence $\{z_t\}$:
$$z_t=x_t+\frac{1-\alpha}{\alpha}(x_t-x_{t-1})$$

\begin{lemma}
\label{z_t update}
The following update rule holds for $\{z_t\}$
$$z_{t+1}=z_t-\eta_g\Deltil_{t}.$$
\end{lemma}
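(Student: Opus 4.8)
The plan is a direct algebraic unrolling that uses only the server update (line \ref{line 14}) and Lemma \ref{delta_update}. First I would record the two facts that drive the computation: from line \ref{line 14}, $x_{s+1}-x_s=-\eta_g\Delta_{s+1}$ for every $s$, so in particular $x_t-x_{t-1}=-\eta_g\Delta_t$; and from Lemma \ref{delta_update}, $\Delta_{t+1}=\alpha\Deltil_{t}+(1-\alpha)\Delta_t$, which I would rewrite in the increment form $\Delta_{t+1}-\Delta_t=\alpha(\Deltil_{t}-\Delta_t)$. Substituting the first fact into the definition of $z_t$ gives the compact form $z_t=x_t-\frac{1-\alpha}{\alpha}\eta_g\Delta_t$, and likewise $z_{t+1}=x_{t+1}-\frac{1-\alpha}{\alpha}\eta_g\Delta_{t+1}$.

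Next I would subtract these two identities to obtain $z_{t+1}-z_t=(x_{t+1}-x_t)-\frac{1-\alpha}{\alpha}\eta_g(\Delta_{t+1}-\Delta_t)$. Replacing $x_{t+1}-x_t$ by $-\eta_g\Delta_{t+1}$ and $\Delta_{t+1}-\Delta_t$ by $\alpha(\Deltil_{t}-\Delta_t)$ yields $z_{t+1}-z_t=-\eta_g\Delta_{t+1}-(1-\alpha)\eta_g(\Deltil_{t}-\Delta_t)$. Finally, expanding $\Delta_{t+1}=\alpha\Deltil_{t}+(1-\alpha)\Delta_t$ once more, the $\Delta_t$-terms cancel and the $\Deltil_{t}$-terms combine to $-(\alpha+(1-\alpha))\eta_g\Deltil_{t}=-\eta_g\Deltil_{t}$, which is exactly the claimed recursion $z_{t+1}=z_t-\eta_g\Deltil_{t}$.

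There is no serious obstacle here; the only point requiring a word of care is the base case, since the definition of $z_0$ invokes $x_{-1}$, which the algorithm does not produce. The natural convention, consistent with the initialization $\Delta_{0}=\mathbf{0}$ and with the relation $x_s-x_{s-1}=-\eta_g\Delta_s$, is to set $x_{-1}=x_0$, so that $z_0=x_0$. Under this convention the identity $z_s=x_s-\frac{1-\alpha}{\alpha}\eta_g\Delta_s$ holds for all $s\ge 0$, and the three-line computation above goes through verbatim for every $t\ge 0$.
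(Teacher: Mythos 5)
Your proof is correct and is essentially the same computation as the paper's: both hinge on the server update $x_{t+1}-x_t=-\eta_g\Delta_{t+1}$ (line \ref{line 14}) together with Lemma \ref{delta_update}, differing only in whether one expands $z_{t+1}$ directly in terms of $z_t$ (the paper) or first rewrites $z_s=x_s-\frac{1-\alpha}{\alpha}\eta_g\Delta_s$ and takes differences (you). Your remark fixing the base case via the convention $x_{-1}=x_0$, so that $z_0=x_0$, addresses a detail the paper leaves implicit and is consistent with the initialization $\Delta_0=\mathbf{0}$.
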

\begin{proof}
\begin{align*}
    z_{t+1}&=x_{t+1}+\frac{1-\alpha}{\alpha}(x_{t+1}-x_t)\\
    &=x_t-\eta_g \Delta_{t+1}+\frac{1-\alpha}{\alpha}(-\eta_g\Delta_{t+1})\\
    &=z_t-\frac{1-\alpha}{\alpha}(-\eta_g\Delta_t)-\eta_g\Delta_{t+1}+\frac{1-\alpha}{\alpha}(-\eta_g \Delta_{t+1})\\
    &=z_t-\eta_g\left(\frac{1}{\alpha}\Delta_{t+1}-\frac{1-\alpha}{\alpha}\Delta_t\right)\\
    &=z_t-\eta_g\left(\frac{1}{\alpha}(\alpha \Deltil_{t}+(1-\alpha)\Delta_t)-\frac{1-\alpha}{\alpha}\Delta_t\right)\\
    &=z_t-\eta_g\Deltil_{t}
\end{align*}
\end{proof}
In the following proof, we suppose that every client $C_k$ receives the $x_t$ and $\Delta_t$ from the server, then performs local gradient descent to generate $\{x^t_{i,k}\}_{0\le i \le K-1}$. However, only the selected clients in $\mathcal{S}_t$ sends their local update results $\Delta^t_i$ to the server. Note that although this modification violates the setting of federated learning, it keeps the output of the algorithm unchanged.

Define 
$$\varepsilon_t = \frac{1}{KN}\sum_{i\in [N],k \in [k]}\mathbb{E}\|x_t-x^t_{i,k}\|^2$$
which corresponds to the client drift in the $t$-th epoch. $\varepsilon_t$ can be upper bounded by the following lemma:

\begin{lemma}
Suppose $\eta_l\le \frac{1}{4LK}$, we have
\begin{align*}
    \varepsilon_t\le 3K\eta_l^2\left[6\alpha K\sigma_g^2+6\alpha K G^2+2(1-\alpha)K\mathbb{E}\|\Delta_t\|^2+\alpha^2\sigma_l^2\right]
\end{align*}
\end{lemma}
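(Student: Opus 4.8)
The plan is to unroll each local trajectory as a telescoping sum of the local steps, average the resulting estimate over \emph{all} $N$ clients so that Assumption \ref{global variance} becomes available, and then close a self-referential recursion in $\varepsilon_t$ using the step-size hypothesis $\eta_l\le\frac{1}{4LK}$.

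First I would write, for a (virtual) client $i$ running its $K$ local steps, $x_t-x^t_{i,k}=-\eta_l\sum_{j=0}^{k-1}v^t_{i,j}=-\eta_l\sum_{j=0}^{k-1}\left(\alpha g^t_{i,j}+(1-\alpha)\Delta_t\right)$. Splitting $g^t_{i,j}=\nabla f_i(x^t_{i,j})+\left(g^t_{i,j}-\nabla f_i(x^t_{i,j})\right)$, the rescaled noise terms $\alpha\left(g^t_{i,j}-\nabla f_i(x^t_{i,j})\right)$ form a martingale difference sequence with second moment at most $\alpha^2\sigma_l^2$ by Assumption \ref{local variance}, so Lemma \ref{seperating mean and variance} gives $\mathbb{E}\|x_t-x^t_{i,k}\|^2\le 2\eta_l^2\,\mathbb{E}\left\|\sum_{j=0}^{k-1}\left(\alpha\nabla f_i(x^t_{i,j})+(1-\alpha)\Delta_t\right)\right\|^2+2k\eta_l^2\alpha^2\sigma_l^2$. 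Lemma \ref{relaxed triangle1} then pulls the inner sum out at the price of a factor $k\le K$, and convexity of $\|\cdot\|^2$ (Jensen) gives the split $\|\alpha\nabla f_i(x^t_{i,j})+(1-\alpha)\Delta_t\|^2\le\alpha\|\nabla f_i(x^t_{i,j})\|^2+(1-\alpha)\|\Delta_t\|^2$, which is exactly what produces the $\alpha$ and $(1-\alpha)$ weights appearing in the claimed bound.

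Next I would bound $\|\nabla f_i(x^t_{i,j})\|^2$ by inserting $\nabla f_i(x_t)$ and $\nabla f(x_t)$: by Lemma \ref{relaxed triangle1} and $L$-smoothness (Assumption \ref{smoothness}), $\|\nabla f_i(x^t_{i,j})\|^2\le 3L^2\|x^t_{i,j}-x_t\|^2+3\|\nabla f_i(x_t)-\nabla f(x_t)\|^2+3\|\nabla f(x_t)\|^2$. The averaging over $i\in[N]$ is the step that matters here: the middle term averages to $\le 3\sigma_g^2$ by Assumption \ref{global variance}, and the last is $\le 3G^2$ by Assumption \ref{bounded gradient} (this is why the drift is defined with an average over all $N$ clients, not just $\mathcal{S}$). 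Writing $\bar\varepsilon_{t,j}:=\frac{1}{N}\sum_{i\in[N]}\mathbb{E}\|x^t_{i,j}-x_t\|^2$, averaging the previous display over $i$ and summing $j$ (equivalently $k$) over $[K]$ yields an inequality of the shape $\varepsilon_t\le c_1\alpha K^2L^2\eta_l^2\,\varepsilon_t+c_2K\eta_l^2\left(\alpha K\sigma_g^2+\alpha K G^2+(1-\alpha)K\mathbb{E}\|\Delta_t\|^2+\alpha^2\sigma_l^2\right)$ for absolute constants $c_1,c_2$.

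The main obstacle — and the reason for the hypothesis $\eta_l\le\frac{1}{4LK}$ — is that $\varepsilon_t$ reappears on the right-hand side through the smoothness bound on $\nabla f_i(x^t_{i,j})$. Under $\eta_l\le\frac{1}{4LK}$ the self-referential coefficient $c_1\alpha K^2L^2\eta_l^2$ is an absolute constant strictly below $1$, so it can be moved to the left-hand side at the cost of only an absolute constant factor; a careful accounting of the constants (using $\alpha\le 1$ where convenient) then gives the stated $\varepsilon_t\le 3K\eta_l^2\left[6\alpha K\sigma_g^2+6\alpha K G^2+2(1-\alpha)K\mathbb{E}\|\Delta_t\|^2+\alpha^2\sigma_l^2\right]$. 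Throughout, all expectations are total expectations, and $\Delta_t$ — being measurable with respect to the history before round $t$ — is treated as fixed within round $t$ and simply carried along as the $\mathbb{E}\|\Delta_t\|^2$ term.
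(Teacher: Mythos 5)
Your proof is correct, and it reaches the stated bound by a route that differs from the paper's in how the recursion over the $K$ local steps is closed, even though the core ingredients are identical. Both arguments separate the stochastic-gradient noise from its mean (Lemma \ref{seperating mean and variance}), decompose $\nabla f_i(x^t_{i,j})$ through $\nabla f_i(x_t)$ and $\nabla f(x_t)$ so that smoothness, Assumption \ref{global variance} and Assumption \ref{bounded gradient} contribute the $L^2\cdot(\text{drift})$, $\sigma_g^2$ and $G^2$ pieces, average over all $N$ virtual clients to make the heterogeneity bound available, and spend the hypothesis $\eta_l\le\frac{1}{4LK}$ on taming the drift's feedback into itself. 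The difference is mechanical but real: the paper sets up a one-step recursion $\epsilon^t_k\le\bigl(1+\frac{1}{K-1}\bigr)\epsilon^t_{k-1}+(\text{const})$ via Lemma \ref{relaxed triangle2} with $a=\frac{1}{2K-1}$ and unrolls it geometrically using $(k-1)\bigl[\bigl(1+\frac{1}{k-1}\bigr)^k-1\bigr]\le 3k$, whereas you telescope the entire local trajectory, apply Lemma \ref{relaxed triangle1} once across the $k\le K$ summands, and solve a single self-referential inequality $\varepsilon_t\le c_1\alpha K^2L^2\eta_l^2\varepsilon_t+(\text{const})$ by absorbing the left-hand term. Your route avoids the per-step bookkeeping and the choice of $a$, at the cost of slightly looser constants in the naive accounting (e.g.\ a worst-case absorption factor of $\bigl(1-\tfrac{3}{8}\bigr)^{-1}=\tfrac{8}{5}$ when $\alpha=1$); as you note, the constants still close, provided one averages the $2k\eta_l^2\alpha^2\sigma_l^2$ noise term over $k$ rather than crudely replacing $k$ by $K$ before absorbing, so that the $\sigma_l^2$ coefficient stays below the stated $3$. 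Your observation that the drift must be defined over all $N$ clients (not just $\mathcal{S}$) for Assumption \ref{global variance} to apply matches the paper's remark about virtual local updates.
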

\begin{proof}
Define $$\epsilon^t_k=\frac{1}{N}\sum_{i\in [N]}\mathbb{E}\|x^t_{i,k}-x_t\|^2$$ as the client drift in the $k$-th local epoch of the $t$-th global iteration. 
Note that $\epsilon^t_0=0$. For $k\ge 1$ we have 
\begin{align*}
    \epsilon^t_k &= \frac{1}{N}\sum_{i\in [N]}\mathbb{E}\|x^t_{i,k-1}-\eta_l(\alpha g^t_{i,k-1}+(1-\alpha)\Delta_t)-x_t\|^2\\
    &\le \frac{1}{N}\sum_{i\in [N]}\mathbb{E}\|x^t_{i,k-1}-\eta_l(\alpha \nabla f_i(x^t_{i,k-1})+(1-\alpha)\Delta_t)-x_t\|^2+\alpha^2\eta_l^2\sigma_l^2\\
    &\le \frac{1}{N}\sum_{i\in [N]}\mathbb{E}\left[(1+a)\|x^t_{i,k-1}-x_t\|+\left(1+\frac{1}{a}\right)\|\eta_l(\alpha \nabla f_i(x^t_{i,k-1})+(1-\alpha)\Delta_t)\|^2\right]\\&+\alpha^2\eta_l^2\sigma_l^2
\end{align*}
where we seperate the mean and variance and use the inequality \ref{relaxed triangle2}. $a$ is a constant to be chosen later. We further bound the second term as
\begin{align*}
    &\frac{1}{N}\sum_{i\in [N]}\mathbb{E}\|\eta_l(\alpha \nabla f_i(x^t_{i,k-1})+(1-\alpha)\Delta_t)\|^2\\
    &\le \frac{1}{N}\eta_l^2\sum_{i\in [N]}\mathbb{E}\|\alpha[\nabla f_i(x^t_{i,k-1})-\nabla f_i(x_t)+\nabla f_i(x_t)-\nabla f(x_t)+\nabla f(x_t)]+(1-\alpha)\Delta_t\|^2\\
    &\le \frac{1}{N}\eta_l^2\sum_{i\in [N]}\mathbb{E}(3\alpha\|\nabla f_i(x^t_{i,k-1})-\nabla f_i(x_t)\|^2+3\alpha\|\nabla f_i(x_t)-\nabla f(x_t)\|^2+3\alpha\|\nabla f(x_t)\|^2\\
    &+(1-\alpha)\|\Delta_t\|^2)\\
    &\le \eta_l^2(3\alpha L^2\epsilon^t_{k-1}+3\alpha \sigma_g^2+3\alpha G^2+(1-\alpha)\mathbb{E}\|\Delta_t\|^2)
\end{align*}
Hence we have
\begin{align*}
    \epsilon^t_k &\le \left[(1+a)+\left(1+\frac{1}{a}\right)3\alpha L^2\eta_l^2\right]\epsilon^t_{k-1}+\left(1+\frac{1}{a}\right)\eta_l^2\left[3\alpha \sigma_g^2+3\alpha G^2+(1-\alpha)\mathbb{E}\|\Delta_t\|^2\right]\\&+\alpha^2\eta_l^2\sigma_l^2
\end{align*}
For $K=1$, take $a=1$ and the lemma holds due to the above inequality. Suppose that $K\ge 2$ thereafter and take $a=\frac{1}{2K-1}$. It follows from $\eta_l\le \frac{1}{4LK}$ that 
$$(1+a)+\left(1+\frac{1}{a}\right)3\alpha L^2\eta_l^2\le 1+\frac{1}{K-1}$$
Therefore, 
$$
\epsilon^t_k\le \left(1+\frac{1}{K-1}\right)\epsilon^t_{k-1}+\eta_l^2\left[6\alpha K \sigma_g^2+6\alpha K G^2+2(1-\alpha)K\mathbb{E}\|\Delta_t\|^2+\alpha^2\sigma_l^2\right]
$$
Unrolling the recursion, noting that $\epsilon^t_0=0$ and $(k-1)\left[\left(\frac{1}{k-1}+1\right)^k-1\right]\le 3k$ for $k\ge 2$, we get the following
\begin{align*}
\epsilon^t_k&\le (k-1)\left[\left(\frac{1}{K-1}+1\right)^k-1\right]\eta_l^2\left[6\alpha K\sigma_g^2+6\alpha K G^2+2(1-\alpha)K\mathbb{E}\|\Delta_t\|^2+\alpha^2\sigma_l^2\right]\\
&\le3K\eta_l^2\left[6\alpha K\sigma_g^2+6\alpha K G^2+2(1-\alpha)K\mathbb{E}\|\Delta_t\|^2+\alpha^2\sigma_l^2\right]
\end{align*}
Finally, we have  
\begin{align*}
    \varepsilon_t&=\frac{1}{K}\sum_{k\in[K]}\epsilon^t_{k+1}\le 3K\eta_l^2\left[6\alpha K\sigma_g^2+6\alpha K G^2+2(1-\alpha)K\mathbb{E}\|\Delta_t\|^2+\alpha^2\sigma_l^2\right]
\end{align*}
which finishes the proof of the lemma.

\end{proof}

Next we upper bound the norm of $\Deltil_t$.

\begin{lemma}
The expectation of the norm of $\Deltil_t$ in any global epoch $t\in [T]$ can be bounded as 
$$
\mathbb{E}\|\Deltil_t\|^2\le 10L^2\varepsilon_t+10G^2+\frac{12}{S}(1-\frac{S}{N})\sigma_g^2+\frac{2\sigma_l^2}{KS}
$$
\end{lemma}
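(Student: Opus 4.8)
The plan is to peel off, in order, the three sources of randomness in $\Deltil_t=\frac{1}{KS}\sum_{i\in\mathcal{S},\,k\in[K]}g^t_{i,k}$: the minibatch noise, the client drift along the local trajectories, and the random choice of the active set $\mathcal{S}$. First I would apply Lemma~\ref{seperating mean and variance} to the $KS$ stochastic gradients $\{g^t_{i,k}\}_{i\in\mathcal{S},\,k\in[K]}$, ordered so that within each client the index $k$ is increasing. Conditioned on $\mathcal{S}$ and on the history up to round $t$, the differences $g^t_{i,k}-\nabla f_i(x^t_{i,k})$ form a martingale difference sequence (the iterate $x^t_{i,k}$ is a function only of earlier minibatches of the same client) with per-term second moment at most $\sigma_l^2$ by Assumption~\ref{local variance}. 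Dividing by $(KS)^2$ and taking total expectation yields $\mathbb{E}\|\Deltil_t\|^2\le 2\,\mathbb{E}\|\frac{1}{KS}\sum_{i\in\mathcal{S},k}\nabla f_i(x^t_{i,k})\|^2+\frac{2\sigma_l^2}{KS}$, which already isolates the $\frac{2\sigma_l^2}{KS}$ term and reduces the problem to bounding the deterministic-gradient average.

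Next I would write $\nabla f_i(x^t_{i,k})=\big(\nabla f_i(x^t_{i,k})-\nabla f_i(x_t)\big)+\big(\nabla f_i(x_t)-\nabla f(x_t)\big)+\nabla f(x_t)$, average over $i\in\mathcal{S}$ and $k\in[K]$ (the last two summands do not depend on $k$), and apply the relaxed triangle inequality of Lemma~\ref{relaxed triangle1}. The ``global gradient'' piece equals $\nabla f(x_t)$ and is bounded by $G^2$ via Assumption~\ref{bounded gradient}. For the ``drift'' piece, Jensen's inequality and $L$-smoothness (Assumption~\ref{smoothness}) give $\|\frac{1}{KS}\sum_{i\in\mathcal{S},k}(\nabla f_i(x^t_{i,k})-\nabla f_i(x_t))\|^2\le\frac{L^2}{KS}\sum_{i\in\mathcal{S},k}\|x^t_{i,k}-x_t\|^2$; since the iterates do not depend on $\mathcal{S}$, taking the expectation over $\mathcal{S}$ and using $\mathbb{E}_\mathcal{S}\sum_{i\in\mathcal{S}}a_i=\frac{S}{N}\sum_{i\in[N]}a_i$ together with the definition of $\varepsilon_t$ turns this into $L^2\varepsilon_t$. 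For the ``heterogeneity'' piece $\frac{1}{S}\sum_{i\in\mathcal{S}}(\nabla f_i(x_t)-\nabla f(x_t))$, I would invoke the variance identity for uniform sampling without replacement: since these vectors average to zero over all $N$ clients, $\mathbb{E}_\mathcal{S}$ of its squared norm equals $\frac{N-S}{S(N-1)}\cdot\frac{1}{N}\sum_{i\in[N]}\|\nabla f_i(x_t)-\nabla f(x_t)\|^2$, which by Assumption~\ref{global variance} and $\frac{N-S}{N-1}\le\frac{2(N-S)}{N}$ (valid for $N\ge2$) is at most $\frac{2}{S}(1-\frac{S}{N})\sigma_g^2$. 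Collecting these three estimates through the relaxed-triangle splitting gives $\mathbb{E}\|\Deltil_t\|^2$ of the claimed form, the exact constants $10$ and $12$ coming from the particular splitting chosen.

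The step I expect to be the main obstacle is the sampling-without-replacement variance bound together with the need to keep the randomness of $\mathcal{S}$ cleanly independent of the local iterates $x^t_{i,k}$ — this independence is exactly what the ``every client runs local SGD, only $\mathcal{S}$ reports back'' convention stated just before the lemma buys, and it is what makes the factor $1-\frac{S}{N}$ appear (so the term vanishes at full participation) rather than a bare $\sigma_g^2$. Everything else is a routine application of Lemmas~\ref{relaxed triangle1} and \ref{seperating mean and variance} and of the smoothness and bounded-gradient assumptions.
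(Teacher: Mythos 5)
Your proposal is correct and follows essentially the same route as the paper's proof: separate the minibatch noise via Lemma~\ref{seperating mean and variance}, decompose the remaining gradient average into drift, heterogeneity and global-gradient pieces, and handle the client sampling with the without-replacement moment identities under the ``all clients compute, only $\mathcal{S}$ reports'' convention. The only (immaterial) difference is ordering — you decompose before taking the expectation over $\mathcal{S}$ and apply the mean-zero variance formula only to the heterogeneity piece, whereas the paper computes the without-replacement second moment of the full gradient vectors first and then decomposes each resulting term — which is why your constants come out slightly smaller than, and hence still imply, the stated $10$ and $12$.
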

\begin{proof}
Define $\mathbb{I}^t_{i}$ as the random variable which indicates client $i$ is selected in the $t$-th global epoch. For $k\in[K]$, we have the following bound:
\begin{align*}
&\mathbb{E}\left\|\frac{1}{S}\sum_{i\in\mathcal{S}}\nabla f_i(x^t_{i,k})\right\|^2\\
&=\mathbb{E}\left\|\frac{1}{S}\sum_{i\in[N]}\nabla f_i(x^t_{i,k})\mathbb{I}^t_i\right\|^2\\
&=\mathbb{E}\left<\frac{1}{S}\sum_{i\in[N]}\nabla f_i(x^t_{i,k})\mathbb{I}^t_i,\frac{1}{S}\sum_{j\in[N]}\nabla f_j(x^t_{j,k})\mathbb{I}^t_j\right>\\
&=\mathbb{E}\frac{1}{S^2}\left[\sum_{i,j\in[N],i\neq j}
\left<\nabla f_i(x^t_{i,k}),\nabla f_j(x^t_{j,k})\right>\mathbb{E}[\mathbb{I}^t_i\mathbb{I}^t_j]+\sum_{i\in[N]}
\left<\nabla f_i(x^t_{i,k}),\nabla f_i(x^t_{i,k})\right>\mathbb{E}[\mathbb{I}^t_i]\right]\\
&=\mathbb{E}\frac{1}{S^2}\left[\sum_{i,j\in[N],i\neq j}\frac{S(S-1)}{N(N-1)}
\left<\nabla f_i(x^t_{i,k}),\nabla f_j(x^t_{j,k})\right>+\sum_{i\in[N]}
\frac{S}{N}\left<\nabla f_i(x^t_{i,k}),\nabla f_i(x^t_{i,k})\right>\right]\\
&=\mathbb{E}\frac{1}{S^2}\left[\sum_{i,j\in[N]}\frac{S(S-1)}{N(N-1)}
\left<\nabla f_i(x^t_{i,k}),\nabla f_j(x^t_{j,k})\right>+\sum_{i\in[N]}
\frac{S(N-S)}{N(N-1)}\left<\nabla f_i(x^t_{i,k}),\nabla f_i(x^t_{i,k})\right>\right]\\
&\le\mathbb{E}\frac{1}{N^2}\left\|\sum_{i\in[N]}\nabla f_i(x^t_{i,k})\right\|^2
+\mathbb{E}\frac{N-S}{SN(N-1)}\sum_{i\in[N]}\left\|\nabla f_i(x^t_{i,k})\right\|^2\\
&\le\mathbb{E}\left\|\frac{1}{N}\sum_{i\in[N]}(\nabla f_i(x^t_{i,k})-\nabla f_i(x_t))+\nabla f(x_t)\right\|^2\\
&+\mathbb{E}\frac{N-S}{SN(N-1)}\sum_{i\in[N]}\left\|(\nabla f_i(x^t_{i,k})-\nabla f_i(x_t))+(\nabla f_i(x_t)-\nabla f(x_t))+\nabla f(x_t)\right\|^2\\
&\le2\mathbb{E}\left\|\frac{1}{N}\sum_{i\in[N]}(\nabla f_i(x^t_{i,k})-\nabla f_i(x_t))\right\|^2+2\mathbb{E}\left\|\nabla f(x_t)\right\|^2\\
&+3\mathbb{E}\frac{N-S}{SN(N-1)}\sum_{i\in[N]}\left\|\nabla f_i(x^t_{i,k})-\nabla f_i(x_t)\right\|^2\\
&+3\mathbb{E}\frac{N-S}{SN(N-1)}\sum_{i\in[N]}\left\|\nabla f_i(x_t)-\nabla f(x_t)\right\|^2
+3\mathbb{E}\frac{N-S}{S(N-1)}\left\|\nabla f(x_t)\right\|^2\\
&\le 5L^2\frac{1}{N}\sum_{i\in[N]}\mathbb{E}\|x^t_{i,k}-x_t\|^2+5G^2+\frac{6}{S}(1-\frac{S}{N})\sigma_g^2
\end{align*}
In the above inequality, we use the properties of sampling without replacement, lemma \ref{relaxed triangle1}, the smoothness of $f_i$ and the definition of $G$ and $\sigma_g$ sequentially.

According to lemma \ref{seperating mean and variance}, we have 
\begin{align*}
\mathbb{E}\|\Deltil_t\|^2&\le 2\mathbb{E}\left\|\frac{1}{KS}\sum_{i\in\mathcal{S},k\in [K]}\nabla f_i(x^t_{i,k})\right\|^2+\frac{2\sigma_l^2}{KS}\\
&\le 2\mathbb{E}\frac{1}{K}\sum_{k\in[K]}\left\|\frac{1}{S}\sum_{i\in\mathcal{S}}\nabla f_i(x^t_{i,k})\right\|^2+\frac{2\sigma_l^2}{KS}\\
&\le 10L^2\varepsilon_t+10G^2+\frac{12}{S}(1-\frac{S}{N})\sigma_g^2+\frac{2\sigma_l^2}{KS}
\end{align*}
, and the proof of the lemma is complete.
\end{proof}

The next lemma is a simple corollary of lemma \ref{delta_update}
\begin{lemma}
\label{induction 1}
The norm of $\{\Delta_t\}$ and $\{\Deltil_t\}$ satisfies the following inequality
$$
\|\Delta_{t+1}\|^2\le \alpha \|\Deltil_{t+1}\|^2+(1-\alpha)\|\Delta_{t}\|^2
$$
\end{lemma}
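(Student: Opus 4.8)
The plan is to read this off directly from the exponential-moving-average identity of Lemma \ref{delta_update} together with convexity of the squared norm; there is essentially no computation beyond bookkeeping the weights. First I would recall that Lemma \ref{delta_update} writes $\Delta_{t+1}$ as the convex combination $\Delta_{t+1}=\alpha\,\Deltil_{t}+(1-\alpha)\,\Delta_{t}$, where the coefficients $\alpha$ and $1-\alpha$ are nonnegative and sum to $1$ precisely because $\alpha\in(0,1]$. Since $v\mapsto\|v\|^2$ is convex on $\mathbb{R}^d$, Jensen's inequality applied to this two-point combination yields $\|\Delta_{t+1}\|^2\le\alpha\|\Deltil_{t}\|^2+(1-\alpha)\|\Delta_{t}\|^2$, which is the claimed recursion (read with $\Deltil_t$ on the right-hand side).

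To keep everything inside the toolbox already assembled in Section \ref{Preliminary lemmas}, I would instead invoke Lemma \ref{relaxed triangle2} with $v_1=\alpha\,\Deltil_{t}$, $v_2=(1-\alpha)\,\Delta_{t}$, and the tailored choice $a=\frac{1-\alpha}{\alpha}$ (legitimate whenever $\alpha<1$). Then $1+a=\frac{1}{\alpha}$ and $1+\frac{1}{a}=\frac{1}{1-\alpha}$, so the two terms of Lemma \ref{relaxed triangle2} collapse exactly to $\alpha\|\Deltil_{t}\|^2$ and $(1-\alpha)\|\Delta_{t}\|^2$. The single corner case $\alpha=1$ is handled in one line: there $\Delta_{t+1}=\Deltil_{t}$, so the inequality holds with equality.

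There is no genuine obstacle here; the statement is a one-line corollary of Lemma \ref{delta_update}. The only point requiring a moment's care is picking the splitting parameter $a$ so that the expanded weights match the target coefficients, and observing that the hypothesis $\alpha\le 1$ is exactly what makes $(\alpha,1-\alpha)$ a valid convex combination (so that no blow-up factor survives). This recursive bound on $\|\Delta_t\|^2$ is presumably recorded only so that it can later be unrolled, in combination with the estimate on $\mathbb{E}\|\Deltil_t\|^2$ and the client-drift bound $\varepsilon_t$, to control $\mathbb{E}\|\Delta_t\|^2$ uniformly over $t\in[T]$.
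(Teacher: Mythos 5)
Your proposal is correct and matches the paper's own one-line proof exactly: the paper also notes $\Delta_{t+1}=\alpha\Deltil_{t}+(1-\alpha)\Delta_t$ from Lemma \ref{delta_update} and applies Jensen's inequality to the convex combination. Your observation that the right-hand side should read $\|\Deltil_t\|^2$ rather than $\|\Deltil_{t+1}\|^2$ is a fair catch of an indexing slip in the lemma statement, and the alternative derivation via Lemma \ref{relaxed triangle2} with $a=\frac{1-\alpha}{\alpha}$ is a valid (if unnecessary) equivalent.
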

\begin{proof}
Note that $\Delta_{t+1}=\alpha \Deltil_{t}+(1-\alpha)\Delta_t$ and apply Jenson's inequality.
\end{proof}

We next bound the norm of $\Deltil_t$ and $\Delta_t$.

\begin{lemma}
\label{induction 2}
Let $C_1=10KSG^2+12K(1-\frac{S}{N})\sigma_g^2+2\sigma_l^2$ and $C_2=6\alpha\sigma_g^2+6\alpha G^2+\frac{\alpha\sigma_l^2}{K}$ as in the statement of the theorem. Suppose that $\eta_l\le \frac{1}{8KL}$, then we have the following bound on $\|\Deltil_t\|^2$ and $\|\Deltil_t\|^2$:
$$
\max\{\mathbb{E}\|\Deltil_t\|^2, \mathbb{E}\|\Deltil_t\|^2\}\le \frac{16C_1}{KS}+480K^2L^2\eta_l^2C_2
$$
\end{lemma}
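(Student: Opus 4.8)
The plan is to prove both bounds simultaneously by induction on the global epoch index $t\in[T]$; I read the statement as asserting that $\mathbb{E}\|\Delta_t\|^2$ and $\mathbb{E}\|\Deltil_t\|^2$ are each at most $M:=\frac{16C_1}{KS}+480K^2L^2\eta_l^2C_2$. The induction chains together the three facts already established: the client-drift bound $\varepsilon_t\le 3K\eta_l^2\left[6\alpha K\sigma_g^2+6\alpha KG^2+2(1-\alpha)K\mathbb{E}\|\Delta_t\|^2+\alpha^2\sigma_l^2\right]$, the bound $\mathbb{E}\|\Deltil_t\|^2\le 10L^2\varepsilon_t+10G^2+\frac{12}{S}(1-\frac{S}{N})\sigma_g^2+\frac{2\sigma_l^2}{KS}$, and the Jensen-type inequality $\mathbb{E}\|\Delta_{t+1}\|^2\le\alpha\mathbb{E}\|\Deltil_t\|^2+(1-\alpha)\mathbb{E}\|\Delta_t\|^2$ coming from Lemma~\ref{induction 1} together with the identity $\Delta_{t+1}=\alpha\Deltil_t+(1-\alpha)\Delta_t$ of Lemma~\ref{delta_update}.

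The base case is immediate: $\Delta_0=\mathbf 0$ gives $\mathbb{E}\|\Delta_0\|^2=0\le M$. For the inductive step, assume $\mathbb{E}\|\Delta_t\|^2\le M$. First I would simplify the drift bound: since $\alpha\in(0,1]$ forces $\alpha^2\le\alpha$, we have $6\alpha K\sigma_g^2+6\alpha KG^2+\alpha^2\sigma_l^2\le KC_2$, and combining this with $1-\alpha\le 1$ and the inductive hypothesis yields $\varepsilon_t\le 3K^2\eta_l^2(C_2+2M)$. Substituting into the bound for $\mathbb{E}\|\Deltil_t\|^2$ and using the algebraic identity $10G^2+\frac{12}{S}(1-\frac{S}{N})\sigma_g^2+\frac{2\sigma_l^2}{KS}=\frac{C_1}{KS}$ gives $\mathbb{E}\|\Deltil_t\|^2\le 30K^2L^2\eta_l^2C_2+60K^2L^2\eta_l^2M+\frac{C_1}{KS}$. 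Now the step-size constraint does the work: $\eta_l\le\frac1{8KL}$ makes $K^2L^2\eta_l^2\le\frac1{64}$, so the middle term is at most $\frac{15}{16}M$, and the remaining $\frac1{16}M$ of headroom equals exactly $\frac{C_1}{KS}+30K^2L^2\eta_l^2C_2$, which absorbs the other two terms; hence $\mathbb{E}\|\Deltil_t\|^2\le M$. Finally Lemma~\ref{induction 1} gives $\mathbb{E}\|\Delta_{t+1}\|^2\le\alpha M+(1-\alpha)M=M$, closing the induction.

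The argument is entirely routine arithmetic, so there is no real obstacle; the only point demanding care is that the constants $16$ and $480$ in the statement are calibrated so that the slack $\frac1{16}M$ left over after applying $\eta_l\le\frac1{8KL}$ matches $\frac{C_1}{KS}+30K^2L^2\eta_l^2C_2$ exactly. One must therefore keep the coefficient of $M$ (namely $60$) strictly below the reciprocal $64$ of the squared step-size cap, and must not prematurely loosen the $\alpha^2\le\alpha$ and $1-\alpha\le 1$ reductions in a way that would cost the needed room.
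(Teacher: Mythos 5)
Your proof is correct and follows essentially the same route as the paper's: induction on $t$ with base case $\Delta_0=\mathbf{0}$, chaining the drift bound, the bound on $\mathbb{E}\|\Deltil_t\|^2$, and Lemma~\ref{induction 1} to close the loop. You actually make the argument slightly more careful than the paper by spelling out the final arithmetic (that $\eta_l\le\frac{1}{8KL}$ leaves exactly $\frac{1}{16}M=\frac{C_1}{KS}+30K^2L^2\eta_l^2C_2$ of headroom) and by using the correct index $\Deltil_t$ in the last step, where the paper's own write-up has a typo ($\Deltil_{t+1}$).
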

\begin{proof}
We prove the lemma by induction.
Denote $V=\frac{16C_1}{KS}+480K^2L^2\eta_l^2C_2$.

Note that $\|\Delta_0\|^2=0$. Now we assume that $\mathbb{E}\|\Delta_t\|^2\le V$. Then we have 
\begin{align*}
    \mathbb{E}\|\Deltil_t\|^2&\le 10L^2\varepsilon_t+\frac{C_1}{KS}\\
    &\le 10L^2\cdot 3K\eta_l^2[2(1-\alpha)K\mathbb{E}\|\Delta_t\|^2+KC_2]+\frac{C_1}{KS}\\
    &\le 60K^2L^2\eta_l^2V+30K^2L^2\eta_l^2 C_2+\frac{C_1}{KS}\\&\le V
\end{align*}
The last step is due to the assumption that $\eta_l\le\frac{1}{8LK}$.

According to lemma \ref{induction 1}, we can bound $\mathbb{E}\|\Delta_{t+1}\|^2$ as 
\begin{align*}
    \mathbb{E}\|\Delta_{t+1}\|^2&\le \alpha \mathbb{E}\|\Deltil_{t+1}\|^2+(1-\alpha)\mathbb{E}\|\Delta_{t}\|^2\\
    &\le \alpha V+(1-\alpha)V\\
    &\le V
\end{align*}
This finishes the induction.
\end{proof}

Finally, we return to the proof of the main theorem.

\begin{proof}
We first prove the convex case.

By lemma \ref{z_t update}, we expand $\|z_{t+1}-x^*\|^2$ as 
\begin{align*}
    \mathbb{E}\|z_{t+1}-x^*\|^2=\mathbb{E}\|z_t-x^*\|^2+2\eta_g
    \mathbb{E}\left<x^*-z_t,\Deltil_t\right>+
    \eta_g^2\mathbb{E}\|\Deltil_t\|^2
\end{align*}

Using perturbed strong convexity inequality \ref{perturbed strong convexity}, We bound the second term as 
\begin{align*}
    &\mathbb{E}\left<x^*-z_t,\Deltil_t\right>\\
    &=\mathbb{E}\left<x^*-z_t,\frac{1}{KN}\sum_{i\in[N],k\in[K]}\nabla f_i(x^t_{i,k})\right>\\
    &\le \frac{1}{KN}\sum_{i\in[N],k\in[K]}\mathbb{E}\left[
    f_i(x^*)-f_i(z_t)-\frac{\mu}{4}\|x^*-z_t\|^2+L\|x^t_{i,k}-z_t\|^2
    \right]\\
    &\le f(x^*)-\mathbb{E}f(z_t)-\frac{\mu}{4}\mathbb{E}\|x^*-z_t\|^2
    +\frac{L}{KN}\sum_{i\in[N],k\in[K]}[2\mathbb{E}\|x^t_{i,k}-x_t\|^2+2\mathbb{E}\|z_t-x_t\|^2]\\
    &=f(x^*)-\mathbb{E}f(z_t)-\frac{\mu}{4}\mathbb{E}\|x^*-z_t\|^2
    +2L\varepsilon_t+\frac{2(1-\alpha)^2}{\alpha^2}L\eta_g^2\mathbb{E}\|\Delta_t\|^2
\end{align*}

Therefore, we have

\begin{align*}
    &\mathbb{E}\|z_{t+1}-x^*\|^2\\
    &\le \mathbb{E}\|z_t-x^*\|^2+2\eta_g
    \left[f(x^*)-\mathbb{E}f(z_t)-\frac{\mu}{4}\mathbb{E}\|x^*-z_t\|^2
    +2L\varepsilon_t+\frac{2(1-\alpha)^2}{\alpha^2}L\eta_g^2\mathbb{E}\|\Delta_t\|^2\right]+
    \eta_g^2\mathbb{E}\|\Deltil_t\|^2\\
    &\le 2\eta_g
    \left[f(x^*)-\mathbb{E}f(z_t)\right]+\left(1-\frac{\mu \eta_g}{2}\right)\mathbb{E}\|z_t-x^*\|^2
    +4L\eta_g\varepsilon_t
    +\frac{4(1-\alpha)^2}{\alpha^2}L\eta_g^3\mathbb{E}\|\Delta_t\|^2
    +\eta_g^2\mathbb{E}\|\Deltil_t\|^2\\
    &\le 2\eta_g
    \left[f(x^*)-\mathbb{E}f(z_t)\right]+\left(1-\frac{\mu \eta_g}{2}\right)\mathbb{E}\|z_t-x^*\|^2
    +4L\eta_g\varepsilon_t
    +\frac{4(1-\alpha)^2}{\alpha^2}L\eta_g^3 V
    +\eta_g^2 V
\end{align*}

Rearranging the terms and take $\eta_l=\min\left(\frac{\min(\eta_g,1)}{8LK},\frac{1}{8LSK^2}\right)$, we get

\begin{align*}
    &\mathbb{E}f(z_t)-f(x^*)\\
    &= \mathcal{O}\left(\frac{1}{\eta_g}\left[\left(1-\frac{\mu \eta_g}{2}\right)\mathbb{E}\|z_t-x^*\|^2-\mathbb{E}\|z_{t+1}-x^*\|^2\right]
    +K^2\eta_l^2L(C_2+V)+\frac{L}{\alpha^2}\eta_g^2 V +\eta_g V\right)\\
    &=\mathcal{O}\left(\frac{1}{\eta_g}\left[\left(1-\frac{\mu \eta_g}{2}\right)\mathbb{E}\|z_t-x^*\|^2-\mathbb{E}\|z_{t+1}-x^*\|^2\right]
    +\eta_g V+\eta_g^2L(C_2+V)+\frac{L}{\alpha^2}\eta_g^2 V 
    \right)
\end{align*}

If $\mu>0$, let the averaging weight 
$w_t=\frac{(1-\frac{\mu\eta_g}{2})^{-t-1}}{\sum_{t\in[T]}(1-\frac{\mu\eta_g}{2})^{-t-1}}$.
Applying lemma \ref{recusion}, we know that there exists an appropriate $\eta_g$ such that
\begin{align*}
    \sum_{t\in[T]}w_t\mathbb{E}f(z_t)-f(x^*)&=
    \mathcal{O}\left(\mu D e^{-\frac{T}{2}}+\frac{V}{\mu T}+\frac{L(C_2+V)}{\alpha^2\mu^2 T^2}\right)\\
    &=\mathcal{O}\left(\mu D e^{-\frac{T}{2}}+\frac{C_1+C_2}{\mu KST}+L\frac{(C_1+KSC_2)}{\alpha^2\mu^2 KST^2}\right)
\end{align*}

If $\mu=0$, apply lemma \ref{recusion2}, we know that 
$$
\frac{1}{T}\sum_{t\in[T]}\mathbb{E}f(z_t)-f(x^*)
=\mathcal{O}\left(
\sqrt{\frac{D(C_1+C_2)}{KST}}+
\sqrt[3]{\frac{D^2(C_1+KS C_2)}{\alpha^2 KS T^2}}\right)
$$

Applying Jenson's Inequality, we obtain the desired results.

For the general non-convex case, we proceed as follows.
Using the smoothness of $f$, we expand $f(z_{t+1})$ as 
\begin{align*}
    &\mathbb{E}f(z_{t+1})\\&=\mathbb{E}f(z_t)+\mathbb{E}\left<\nabla f(z_t),z_{t+1}-z_t\right>+\frac{L}{2}\mathbb{E}\|z_{t+1}-z_t\|^2\\
    &=\mathbb{E}f(z_t)-\eta_g\mathbb{E}\left<\nabla f(z_t),\frac{1}{KN}\sum_{i\in[N],k\in[K]}\nabla f_i(x^t_{i,k})\right>+\frac{L}{2}\mathbb{E}\|z_{t+1}-z_t\|^2\\
    &=\mathbb{E}f(z_t)-\eta_g\mathbb{E}\|\nabla f(z_t)\|^2+\frac{L}{2}\eta_g^2 \mathbb{E}\|\Deltil_t\|^2-\eta_g\mathbb{E}\left<\nabla f(z_t),\frac{1}{KN}\sum_{i\in[N],k\in[K]}(\nabla f_i(x^t_{i,k})-\nabla f_i(z_t))\right>\\
    &\le\mathbb{E}f(z_t)-\eta_g\mathbb{E}\|\nabla f(z_t)\|^2+\frac{L}{2}\eta_g^2 \mathbb{E}\|\Deltil_t\|^2\\
    &+\frac{1}{2}\eta_g\mathbb{E}\|\nabla f(z_t)\|^2 +\frac{1}{2}\eta_g\frac{1}{KN}\sum_{i\in[N],k\in[K]}\mathbb{E}\|\nabla f_i(x^t_{i,k})-\nabla f_i(z_t)\|^2\\
    &\le\mathbb{E}f(z_t)-\eta_g\mathbb{E}\|\nabla f(z_t)\|^2+\frac{L}{2}\eta_g^2 \mathbb{E}\|\Deltil_t\|^2\\
    &+\frac{1}{2}\eta_g\mathbb{E}\|\nabla f(z_t)\|^2 +\eta_g\frac{1}{KN}\sum_{i\in[N],k\in[K]}(\mathbb{E}\|\nabla f_i(x^t_{i,k})-\nabla f_i(x_t)\|^2+
    \mathbb{E}\|\nabla f_i(x_t)-\nabla f_i(z_t)\|^2)\\
    &\le\mathbb{E}f(z_t)-\eta_g\mathbb{E}\|\nabla f(z_t)\|^2+\frac{L}{2}\eta_g^2 \mathbb{E}\|\Deltil_t\|^2+\frac{1}{2}\eta_g\mathbb{E}\|\nabla f(z_t)\|^2 +\eta_g L^2\varepsilon_t+\eta_g L^2\left(\frac{1-\alpha}{\alpha}\right)^2\eta_g^2\mathbb{E}\|\Delta_t\|^2\\
    &\le \mathbb{E}f(z_t)-\frac{1}{2}\eta_g\mathbb{E}\|\nabla f(z_t)\|^2+\frac{L}{2}\eta_g^2 \mathbb{E}\|\Deltil_t\|^2 +\eta_g L^2\varepsilon_t+\eta_g^3 L^2\left(\frac{1-\alpha}{\alpha}\right)^2\mathbb{E}\|\Delta_t\|^2
\end{align*}

Rearranging the terms and take $\eta_l=\min\left(\frac{\min(\eta_g,1)}{8LK},\frac{1}{8LSK^2}\right)$, we get

\begin{align*}
    &\mathbb{E}\|\nabla f(z_t)\|^2\\
    &\le\frac{2}{\eta_g}(\mathbb{E}f(z_t)-\mathbb{E}f(z_{t+1}))
    +L\eta_g\|\Deltil_t\|^2+2L^2\varepsilon_t+
    2\eta_g^2 L^2\left(\frac{1-\alpha}{\alpha}\right)^2\mathbb{E}\|\Delta_t\|^2 \\
    &=\mathcal{O}\left( \frac{1}{\eta_g}(\mathbb{E}f(z_t)-\mathbb{E}f(z_{t+1}))+\eta_g LV +\eta_g^2 L^2(C_2+V)+
    \eta_g^2 L^2\left(\frac{1-\alpha}{\alpha}\right)^2 V\right)
\end{align*}

Applying lemma \ref{recusion2}, we know that there exists an appropriate $\eta_g$ such that 

\begin{align*}
    \frac{1}{T}\sum_{t\in[T]}\mathbb{E}\|\nabla f(z_t)\|^2
    &=\mathcal{O}\left(\sqrt{\frac{LVF}{T}}+\sqrt[3]{\frac{L^2(C_2+\frac{1}{\alpha^2}V)F^2}{T^2}}\right)\\
    &=\mathcal{O}\left(\sqrt{\frac{LF(C_1+C_2)}{KST}}+
    \sqrt[3]{\frac{L^2 F^2(C_1+KS C_2)}{\alpha^2 KS T^2}}\right)
\end{align*}

\end{proof}

\section{Experiment details}
\label{Experiment Details}

\subsection{Dataset generation}
The experiments are conducted on CIFAR10 and CIFAR100 datasets. We follow the usual train/test splits, i.e. 50000 training images are assigned to clients for training and 10000 test images are reserved to evaluate test accuracy. Normalization is applied as preprocessing for both training and test images.

The training data split is balanced in all settings, i.e., each client holds the same amount of data.
For IID splits, the training data is randomly assigned to each client.
For non-IID splits, we use Dirichlet distribution to simulate heterogeneous client distribution \citep{hsieh2020non,yurochkin2019bayesian,acar2021federated}. For each client, we first draw a vector $\mathbf{q}\sim \text{Dir}(\alpha \mathbf{p})$ from a Dirichlet distribution as the class distribution, where $\mathbf{p}$ is an all one vector with length equal to the number of classes and $\alpha>0$ is a concentration parameter. 
Then the training data of each class is sampled from the training set according to $\mathbf{p}$. 
$\alpha$ has a negative correlation with the client heterogeneity, i.e. larger $\alpha$ implies more similar data distribution across clients.

\subsection{Hyperparameter selection}
We report our hyperparameter selection strategy as follows. The number of local training epochs over each client's local training set is selected from $\{2,5\}$. The minibatch size for local SGD is selected from $\{20,50\}$. Local learning rate $\eta_l$ is selected from $\{0.1,1.0\}$. We apply exponential decay on $\eta_l$ as in \cite{acar2021federated}, and the decaying parameter is selected from $\{0.998,0.999,0.9995,1.0\}$. In our implementation, the global learning rate $\eta_g$ in line \ref{line 14} is multiplies by $\eta_l K$, i.e., $\eta_g=1$ corresponds to averaging all client
models in the global update. In light of this, we search $\eta_g$ from $\{0.1,1.0\}$. We apply weight decay of $0.001$ to prevent overfitting.

As for algorithm-dependent hyperparameters, $\alpha$ in FedCM and FedAdam is selected from $\{0.05,0.1\}$, $\alpha$ in FedDyn is selected from $\{0.001,0.01,0.1\}$, $\tau$ in FedAdam is set to $0.01$. FedDyn requires exact minimization in local updates, but we find that 5-step local SGD with appropriate learning rates suffices to achieve nearly 100\% accuracy on clients local training data.

For experiments on CIFAR10 dataset, we choose $5$ as the number of local training epochs, $50$ as batchsize and $0.1$ as local learning rate $\eta_l$. The learning rate decay parameter is set to $0.998$ except for FedDyn, which is set to $0.9995$. The global learning rate $\eta_g$ is set to $1.0$ except for FedAdam, which is set to $0.1$. We choose $\alpha=0.1$ in FedCM for 100 devices 10\% participation Dirichlet 0.6 setting, and $\alpha=0.05$ for other settings. We select $\alpha=0.1$ in FedAdam and $\alpha=0.01$ in FedDyn for all settings.

For experiments on CIFAR100 dataset, we choose $2$ as the number of local training epochs, except for FedCM in 500 devices 2\% participation setting where we choose $5$. The batchsize is set to $50$ for FedCM and FedAdam, and set to $20$ for other methods. We choose $0.1$ as local learning rate $\eta_l$. The learning rate decay parameter is set to $0.999$ for FedDyn and $0.998$ for others. The global learning rate $\eta_g$ is set to $0.1$ for FedAdam, and $1.0$ for others. We choose $\alpha=0.05$ for FedCM and $\alpha=0.1$ for FedAdam. For FedDyn, we choose $\alpha=0.01$ in 100 devices 10\% participation setting, and $\alpha=0.001$ in 500 devices 2\% participation setting.
\subsection{Convergence plots}
The training plots of FedCM and competing baselines on CIFAR10 and CIFAR100 datasets under various settings are provided in \ref{plot 1} and \ref{plot 2}. We have the following observation.

Firstly, FedCM outperforms other strong baselines across different participation rate and heterogeneity levels. From the convergence plots, we observe that the performance gap between FedCM and baselines methods is larger in the 500 devices 2\% participation setting. This verifies our claim that FedCM is robust to the limited participation nature of federated learning.

Furthermore, we observe that the convergence of FedCM is more stable compared with FedAdam. The convergence curves of FedAdam suffer a lot of oscillation, especially in the Dirichlet-0.6 setting. This is the consequence of client drift, as the inconsistency between the loss functions of different clients introduces additional noise into the optimization process. By comparison, FedCM alleviates client drift by utilizing global momentum term in client updates, and is relatively unaffected by this issue.

\begin{figure}[htbp]
\centering
\subfigure[]{
\includegraphics[width=5.5cm]{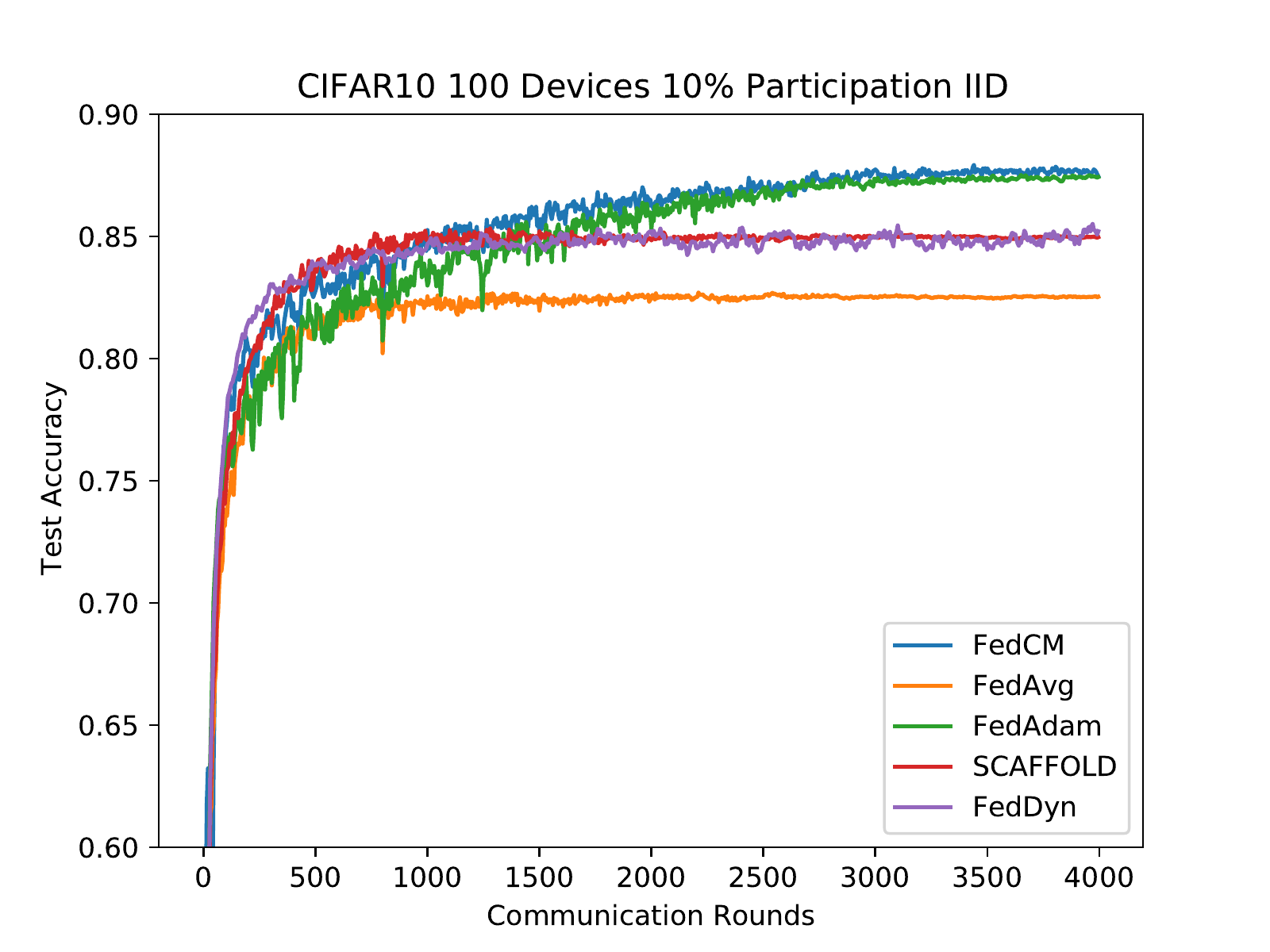}
}
\quad
\subfigure[]{
\includegraphics[width=5.5cm]{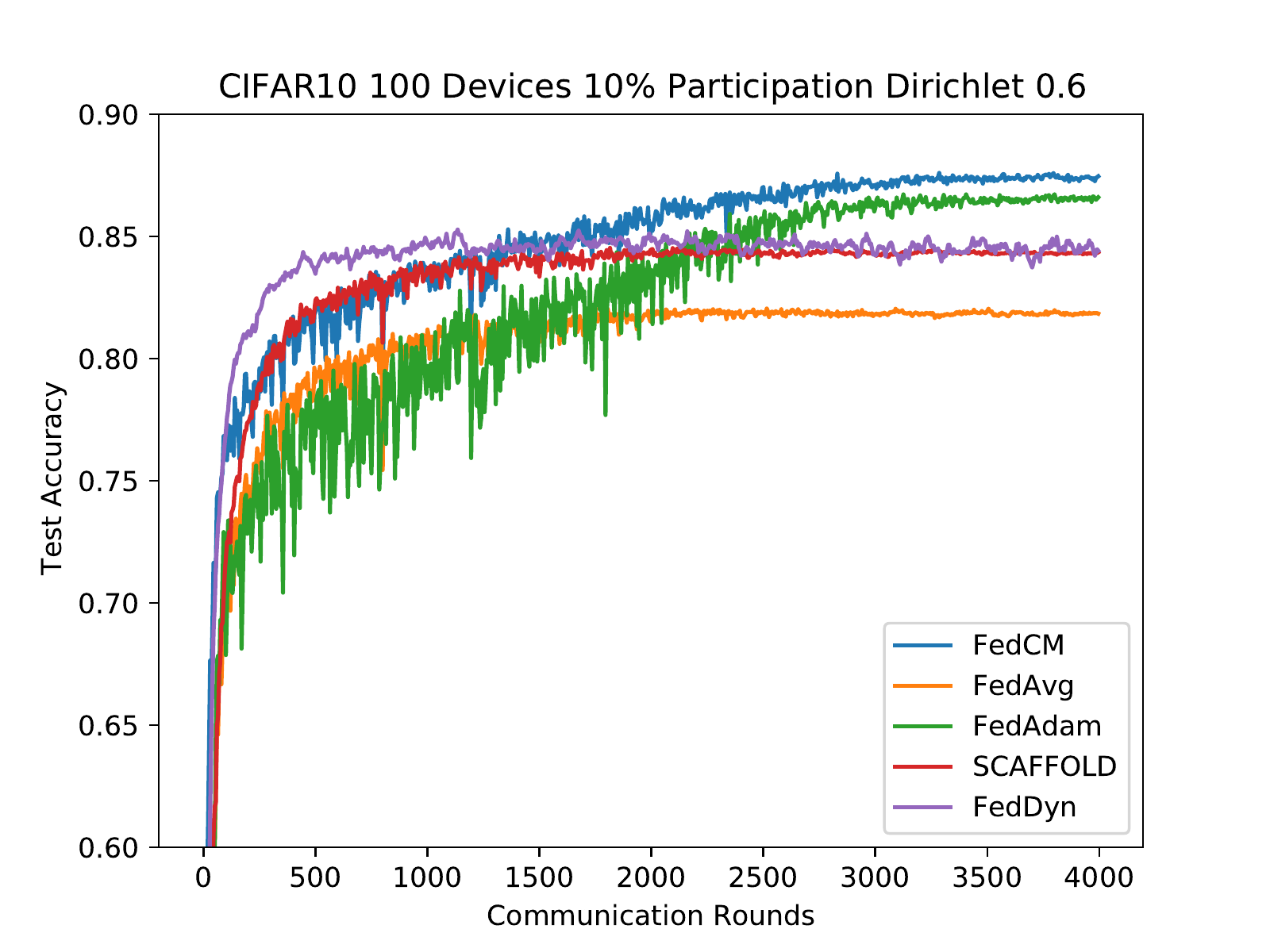}
}
\quad
\subfigure[]{
\includegraphics[width=5.5cm]{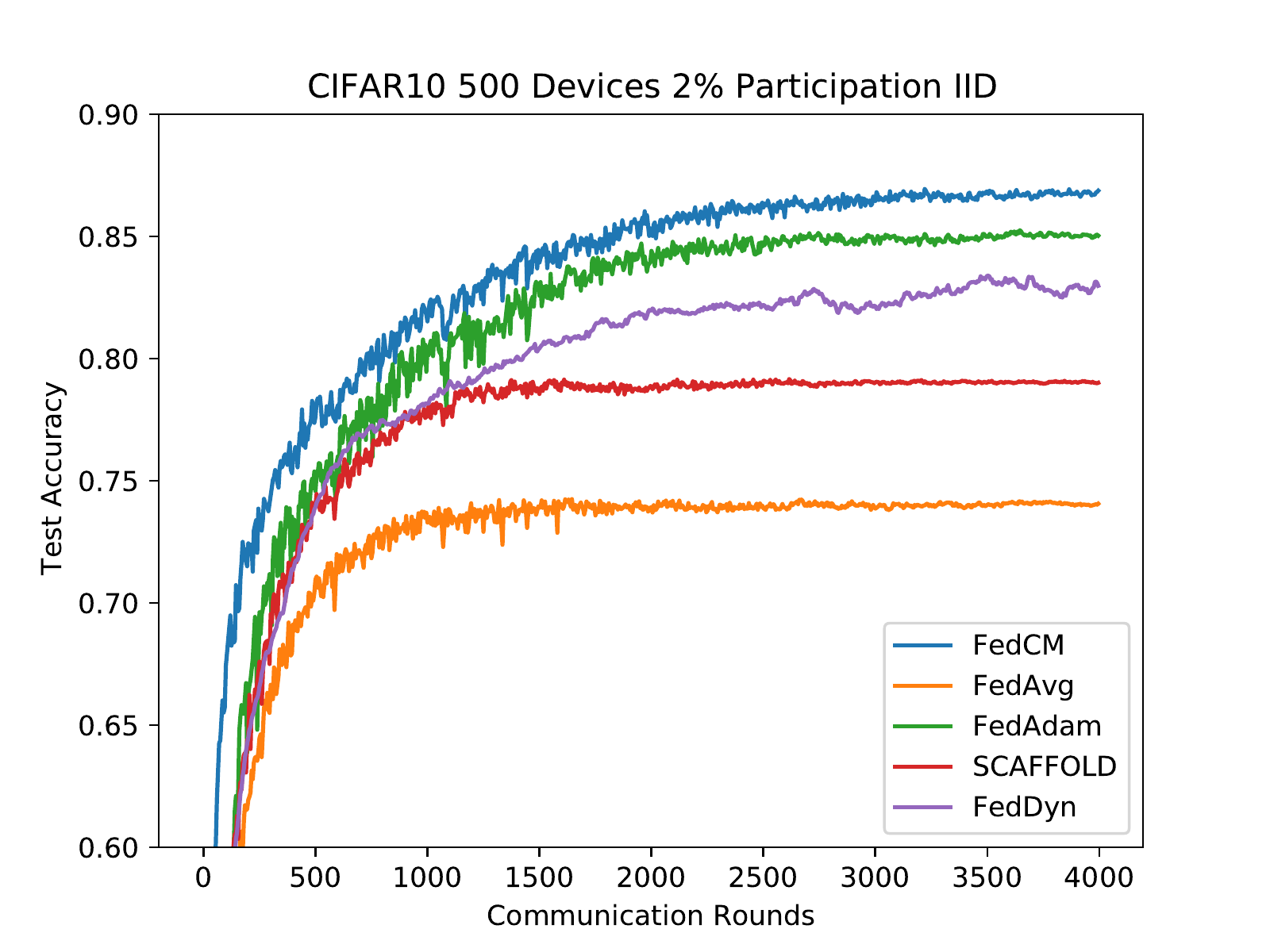}
}
\quad
\subfigure[]{
\includegraphics[width=5.5cm]{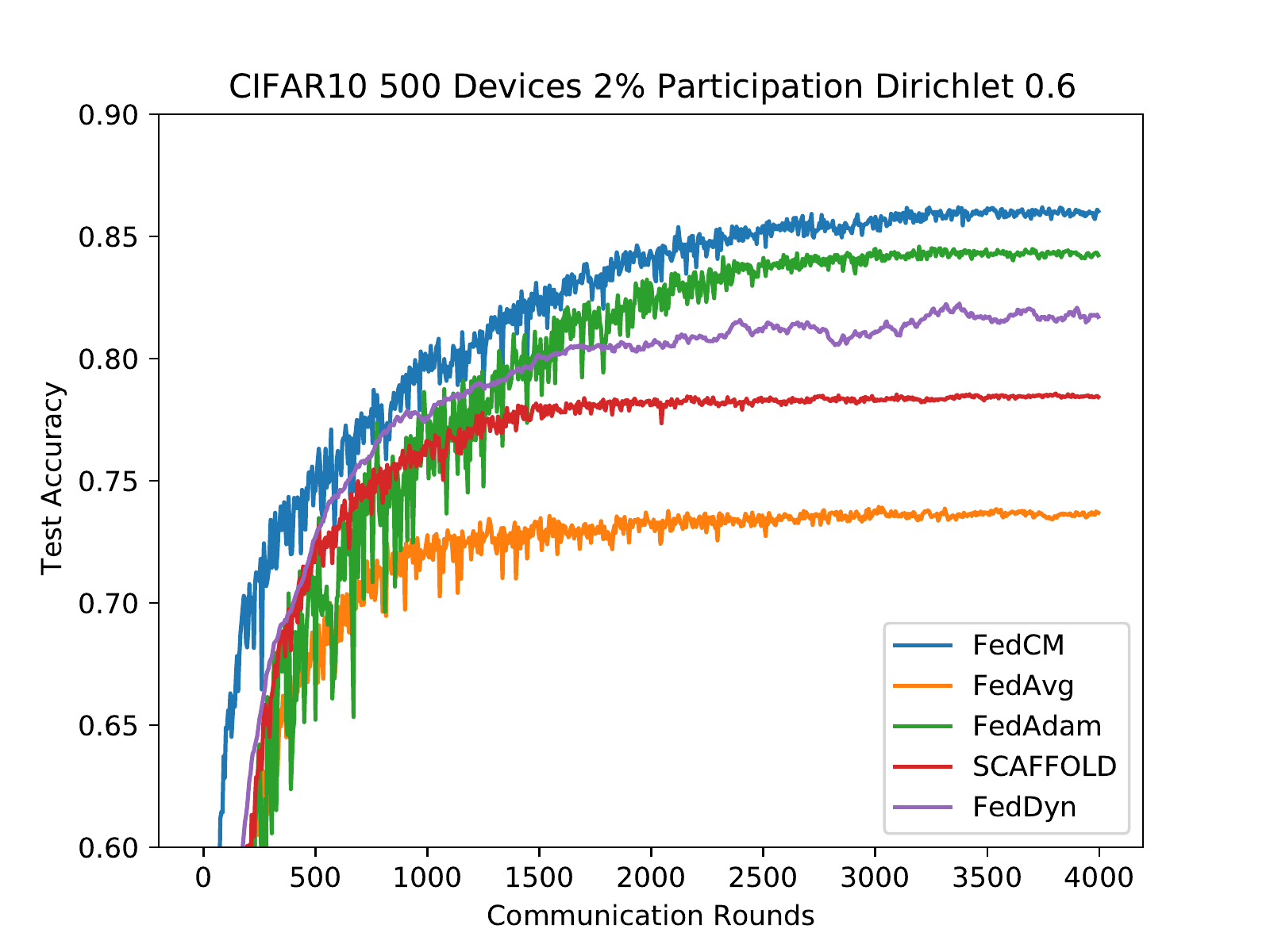}
}
\caption{The convergence plots of CIFAR10 with IID and Dirichlet 0.6 split for 10\% and 2\% client participation rate.}
\label{plot 1}
\end{figure}

\begin{figure}[htbp]
\centering
\subfigure[]{
\includegraphics[width=5.5cm]{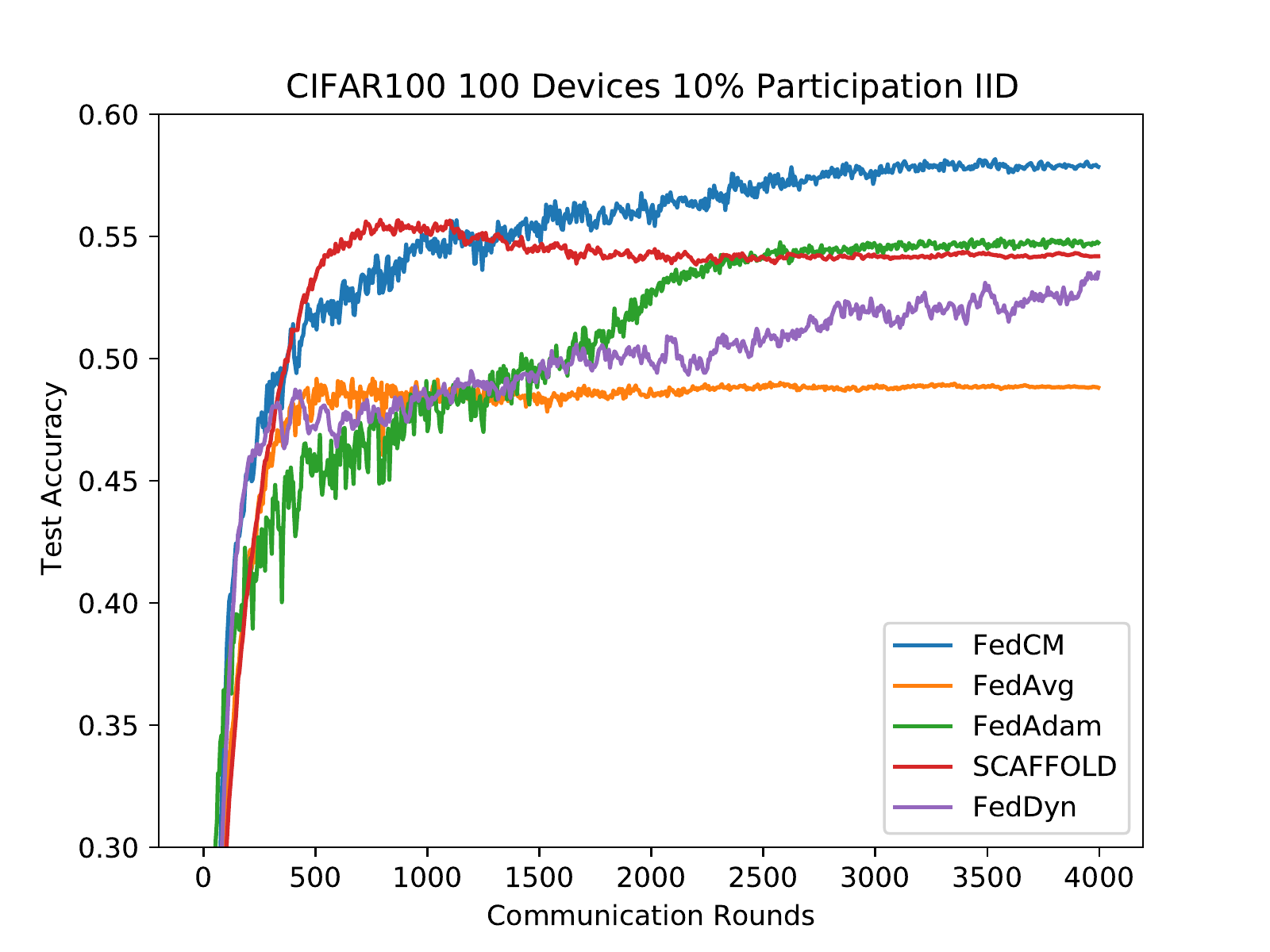}
}
\quad
\subfigure[]{
\includegraphics[width=5.5cm]{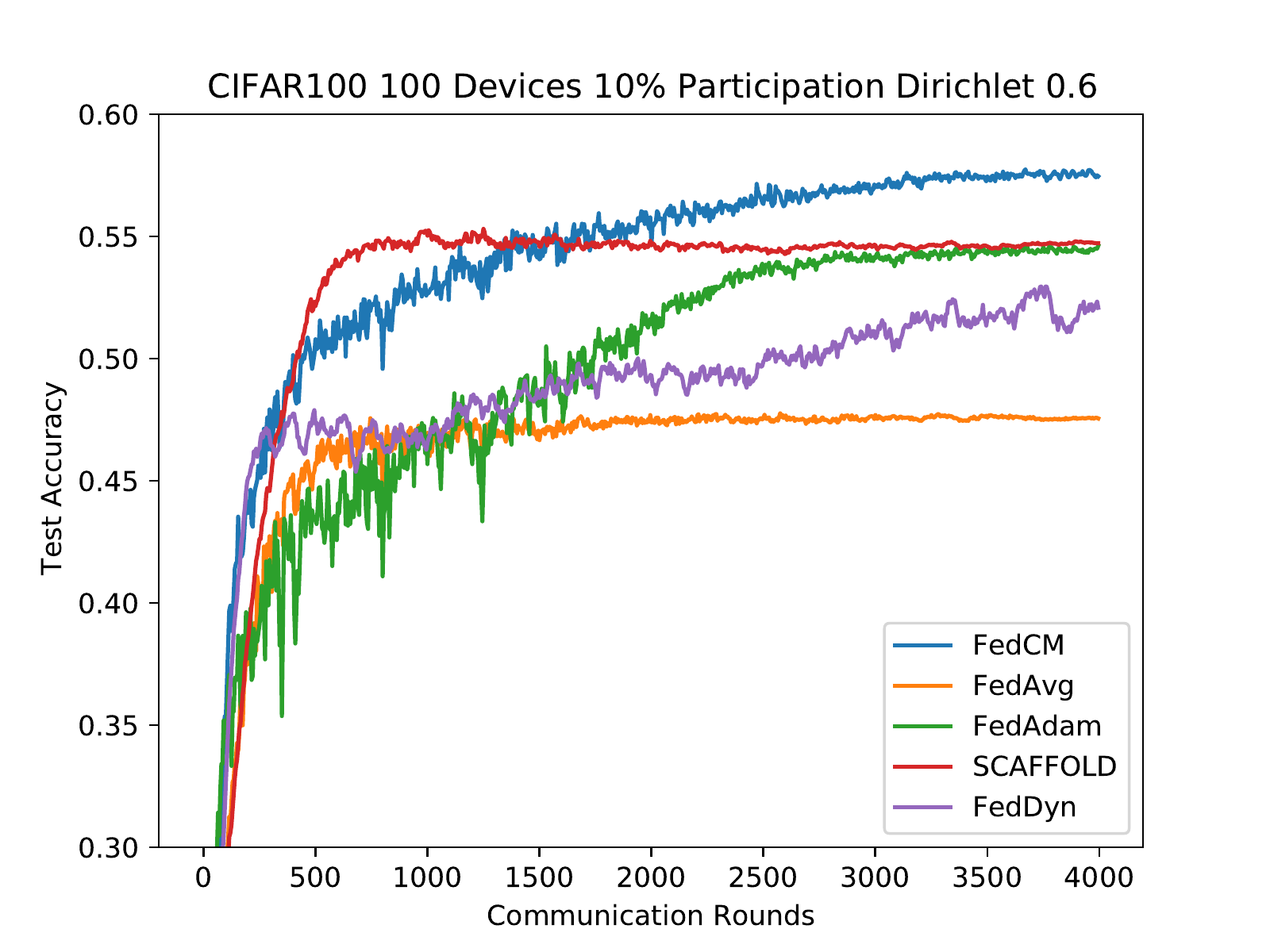}
}
\quad
\subfigure[]{
\includegraphics[width=5.5cm]{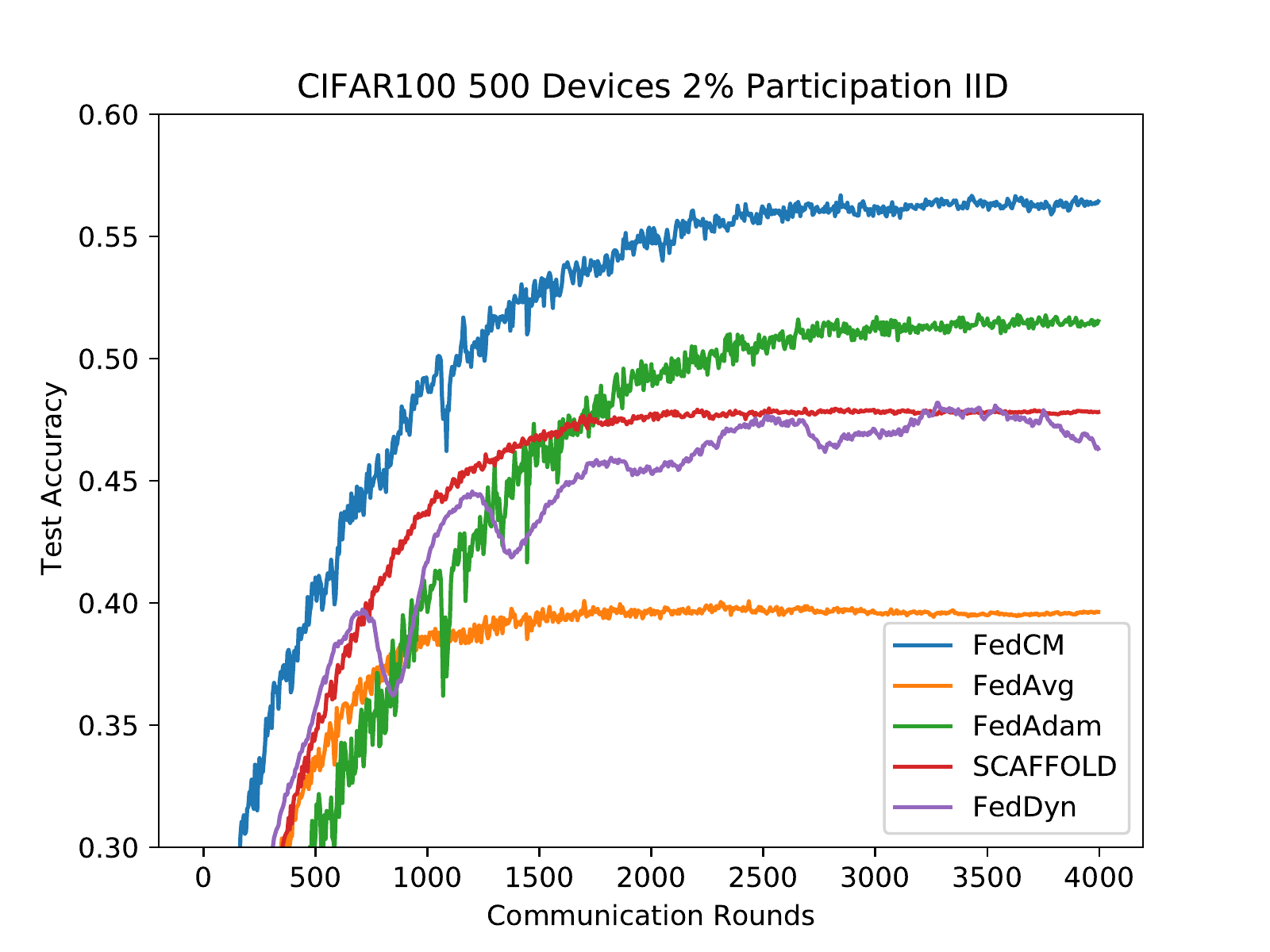}
}
\quad
\subfigure[]{
\includegraphics[width=5.5cm]{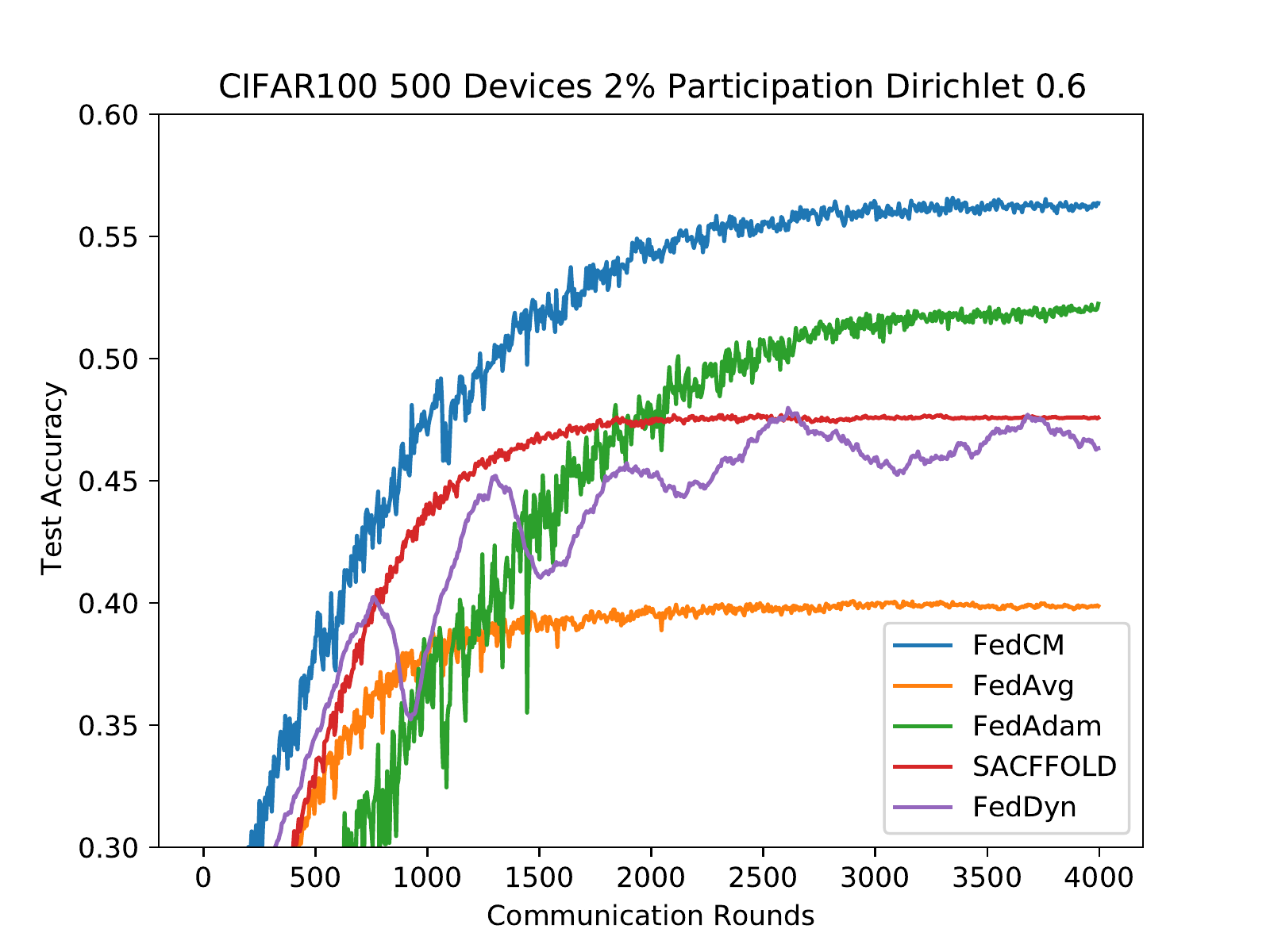}
}
\caption{The convergence plots of CIFAR100 with IID and Dirichlet 0.6 split for 10\% and 2\% client participation rate.}
\label{plot 2}
\end{figure}

\end{document}